\documentclass{article}

\usepackage[preprint]{neurips_2026}

\usepackage{graphicx}
\usepackage{booktabs}
\usepackage{multirow}
\usepackage{algorithmicx}
\usepackage{algpseudocode}
\usepackage[ruled,vlined,linesnumbered]{algorithm2e}
\usepackage{natbib}
\usepackage{amsthm}
\usepackage{todonotes}
\usepackage{comment}
\usepackage{placeins}
\usepackage{float}

\usepackage{colortbl}
\usepackage{xcolor}

\definecolor{cvxs1}{gray}{0.75}  % 0.571 最浅
\definecolor{cvxs2}{gray}{0.35}  % 0.717
\definecolor{cvxs3}{gray}{0.10}  % 0.730 最深
\definecolor{cvxs4}{gray}{0.20}  % 0.725
\definecolor{cvxs5}{gray}{0.15}  % 0.728

\definecolor{cvxt1}{gray}{0.75}  % 0.907 最浅
\definecolor{cvxt2}{gray}{0.20}  % 0.951
\definecolor{cvxt3}{gray}{0.10}  % 0.954 最深（三行相同，颜色相同）
\definecolor{cvxt4}{gray}{0.10}
\definecolor{cvxt5}{gray}{0.10}

\usepackage{amsmath,amssymb,amsfonts,amsthm,mathtools}

\newcommand{\ind}[1]{\mathbb{I}\!\left\{#1\right\}}

\theoremstyle{plain}
\newtheorem{theorem}{Theorem}[section]
\newtheorem{lemma}[theorem]{Lemma}

\newtheorem*{theorem*}{Theorem}

\theoremstyle{definition}

\theoremstyle{remark}

\usepackage[utf8]{inputenc} % allow utf-8 input
\usepackage[T1]{fontenc}    % use 8-bit T1 fonts
\usepackage{hyperref}       % hyperlinks
\usepackage{url}            % simple URL typesetting
\usepackage{booktabs}       % professional-quality tables
\usepackage{amsfonts}       % blackboard math symbols
\usepackage{nicefrac}       % compact symbols for 1/2, etc.
\usepackage{microtype}      % microtypography
\usepackage{xcolor}         % colors

\title{Globally Optimal Training of Spiking Neural Networks via Parameter Reconstruction}

\author{%
  Himanshu Udupi$^{1,2}$  \quad Xiaocong Yang$^{1,2}$ \quad ChengXiang Zhai$^{1,2}$\\ 
  $^{1}$University of Illinois Urbana-Champaign \\
  $^{2}$AI Interpretability @ Illinois \\
  \texttt{\{hudupi2, xy51, czhai\}@illinois.edu} \\
  \\
}
\begin{document}

\maketitle

\begin{abstract}
Spiking Neural Networks (SNNs) have been proposed as biologically plausible and energy-efficient alternatives to conventional Artificial Neural Networks (ANNs). However, the training of SNN usually relies on surrogate gradients due to the non-differentiability of the spike function, introducing approximation errors that accumulate across layers. To address this challenge, we extend the work on convexification of parallel feedforward threshold networks to parallel recurrent threshold networks, which subsume parallel SNNs as a structured special case. Building on this theoretical framework, we propose a parameter reconstruction algorithm for SNN training that demonstrates consistent and significant advantages across various tasks, both as a standalone method and in combination with surrogate-gradient training. The ablations further demonstrate the data scalability and robustness to model configurations of our training algorithm, pointing toward its potential in large-scale SNN training.

\end{abstract}

\section{Introduction}
\iffalse
\cheng{Start with pain points of the conventional NNs to enhance the motivation for studying SNNs. Something like: Recent breakthroughs in artificial intelligence (AI) were enabled by powerful foundation models built using large-scale artificial neural networks (ANNs), which, however, are costly to train and unable to represent symbolic information, which human brain is known to process. Spiking neural networks (SNNs) hold great promise for addressing these limitations.  SNNs are a category....  }
\fi

Recent breakthroughs in Artificial Intelligence were enabled by powerful foundation models built with large-scale Artificial Neural Networks (ANNs), which, however, are generally inefficient in training and inference, and unable to model the internal dynamics across time which human brain is known to process. Spiking neural networks (SNNs) \cite{SNNs_survey}, a category of neural networks with spiking neurons, hold great promise for addressing these limitations. Different from standard Artificial Neural Networks (ANNs), where each neuron outputs a numerical scalar transformed from inputs, a spiking neuron generates a discrete time series to represent information, where the dynamics over time are governed by some differential equations. A most commonly used spiking neuron is Leaky-Integrate-and-Fire (LIF) model, where the representations are binary sequences.

There are several desiderata about spiking neurons and SNNs. A spiking neuron simulates the behaviors that a biological neuron in human brains processes electric signals. The spiking neuron integrates incoming signals, causing its membrane potential to rise and fall over time. When the accumulated membrane potential exceeds a threshold, the neuron "fires", emitting a discrete spike (action potential) as in biological neurons. Second, the sequential representations generated by SNNs are believed to be more expressive than static values in ANNs, especially on time series tasks and tasks that require memories. Finally, it is regarded as an energy-efficient architecture, as the binary representations reduce the computations in SNNs to simple logical operations that can be computed at a low cost. Therefore, SNNs are believed to be 
highly promising for shaping the next-generation AI architectures \cite{nextgen1, nextgen2, nextgen3, nextgen4}.

Despite of great promise, there are a few fundamental challenges that have blocked the development of SNNs, especially to the scale on par with modern Large Language Models (LLMs). On top of them is the lack of effective training algorithms. The indifferentiability of the spike functions makes the direct use of gradient backpropagation infeasible. 
As a result, surrogate gradients \cite{surrogate} are commonly used in modern SNN training, where a continuous surrogate function (usually a sigmoid function) is adopted to approximate the spike function during backpropagation to compute the surrogate of "real gradients". Due to this forward-backward mismatch, parameters in SNNs are not necessarily optimized to optimal directions, and such mismatching error accumulates across layers, making the training of deeper SNNs even more difficult \cite{surrogate_module}. 

To address this problem, we propose a novel training paradigm for SNNs. The paradigm first enumerates all realizable spike activation patterns of the hidden layers into a finite spike dictionary, then solves a convex optimization problem over this dictionary to recover the globally optimal output coefficients. The theoretical foundation behind rests on two of our contributions: we extend the convexification theory of feedforward parallel threshold networks~\cite{ergen_globally_2023} to parallel recurrent threshold networks, and further prove that SNNs based on LIF neurons (LIF-SNNs) are a structured special case of this recurrent framework, establishing the global optimality of parameter reconstruction as an SNN training algorithm. To evaluate whether the proposed paradigm leads to better SNNs in practice, we conduct experiments across a range of tasks and model configurations. Results show consistent and significant advantages over surrogate-gradient baselines, and reveal that the two approaches are complementary: combining parameter reconstruction with surrogate-gradient training yields further performance gains. 

Our main contributions are as follows:
\begin{itemize}
    \item \textbf{Theory.} We extend the convexification of overparameterized parallel threshold networks~\cite{ergen_path_nodate} to feedforward threshold networks, and prove zero-duality gap for overparameterized parallel recurrent threshold networks under weight decay. We further prove that parallel LIF-SNNs are a structured special case of this recurrent framework, establishing zero-duality gap for SNN training.

    \item \textbf{Algorithm.} We propose a new globally optimal training algorithm for parallel LIF-SNNs via parameter reconstruction, enabled by the zero-duality gap result above.

    \item \textbf{Experiments.} Evaluation results demonstrate consistent performance gains over surrogate-gradient baselines across tasks and model configurations, both as a standalone method and combined with surrogate-gradient training. Ablations reveal positive data scaling and robustness to model configurations of our algorithm, and expose a performance ceiling in models trained with surrogate gradient baselines that persists regardless of data volume.
\end{itemize}

\section{Related Works}
\iffalse
\cheng{Add a small paragraph about the existing work on SNNs in general, particularly those that discuss great promise of SNNs. E.g., SNNs: SNNs were initially proposed in .. as a promising alternative to ANNs. Neuroscience research has shown that SNNs .... Cite some existing work where SNNs were shown to be promising or have some specific advantages etc. This is to further strengthen the motivation of studying SNNs in general. }
\fi
\subsection{Training Algorithms for Spiking Neural Networks}

The major bottleneck of SNN training is the non-differentiability of the spike functions. One way to handle this problem is through surrogate gradients, where gradients are computed via continuous approximations such as arc-tangent or sigmoid functions to facilitate backpropagation \cite{surrogate, Boht2000SpikePropBF, Wu_2018_STBP}. However, to the best of our knowledge, there is no analytical or theoretical study that establishes comparable optimality conditions for model weights after surrogate-gradient SNN training. Moreover, the approximation error between forward and backward computations in surrogate-gradient methods accumulates across model layers and lowers model quality \cite{surrogate_module}. 

Another category of approaches to this problem is through ANN-SNN conversion, which an ANN is trained first and then converted to an SNN by replacing artificial neurons with spiking neurons \cite{conversion1, conversion2}. These methods do not train the temporal dynamics directly and typically require a large number of timesteps to 
approximate the source ANN \cite{bu2023optimalannsnnconversionhighaccuracy}, and suffer from significant performance degradation under low-latency restrictions \cite{DIET}. ANN-SNN conversion approaches are most applicable when a pretrained ANN is available, transferring its learned representations directly to the SNN. As our work targets the training of SNN dynamics from scratch, we exclude conversion-based methods from our baselines.

\subsection{Convex Reconstruction of Neural Networks}
\label{reconstruction}

A line of work establishes globally optimal training of 
overparameterized parallel neural networks via convex 
reformulation, including feedforward networks~\cite{ergen_convex_2021, 
ergen_convex_2025, wang_convex_2021}, CNNs~\cite{ergen_implicit_2021}, 
and Transformers~\cite{ergen_convexifying_2022}. However, neural 
networks with recurrent structures such as RNNs and SNNs remain 
underexplored in this framework. The foundations most relevant 
to our work are~\cite{ergen_path_nodate}, which establishes global 
optimality for path-regularized parallel feedforward ReLU networks 
building on the path regularizer of~\cite{tomioka_norm-based_nodate}, 
and~\cite{neyshabur_path-normalized_2016}, which applies path 
normalization to recurrent ReLU networks. \cite{ergen_globally_2023} 
further extends this to feedforward parallel threshold networks 
under weight-decay regularization. Our work extends these results 
to parallel recurrent threshold networks under weight-decay regularization, 
and establishes the first convex reformulation of LIF-SNN training.

\iffalse
\cheng{There should be a section or subsection to introduce SNNs since reviewers may not be familiar with it. You may also mention briefly how existing training algorithm works, and point out its limitation. This would prepare readers better to digest the next section about globally optimal training. Some basic notations and introduction in the next section can be moved here (readers who want to get a basic introduction of SNNs might not be able to easily locate it if it's mixed in the next section. Nowadays, there's a shortage of reviewers, so a paper often ends up with being reviewed by a non-expert. Making sure the main technical story understandable to such non-expert reviewers is key to getting positive reviews. If they cannot understand the contribution and significance, they wouldn't be able to give a positive review. }
\fi
\section{Globally Optimal Training of Spiking Neural Networks}
\subsection{Notation and Preliminaries}
\label{sec:notation_prelim}

For $m\in\mathbb{N}$, let $[m]=\{1,\ldots,m\}$. We write $\ind{\cdot}$ for the indicator function, $\|\cdot\|_p$ for the $\ell_p$ norm, and $p^*$ for the dual exponent. The training set has $n$ samples. For feedforward networks, inputs are given by $X\in\mathbb{R}^{n\times d}$; for recurrent or spiking neural networks, inputs are denoted as $X^{1:T}=(X^1,\ldots,X^T)$ with each $X^t\in\mathbb{R}^{n\times d}$. Labels are collected in $Y$, and all losses $\mathcal L(\cdot,Y)$ are assumed convex in their first argument. The scaled threshold activation is denoted as $\sigma_s(x)=s\ind{x\ge 0}$, with $s$ a trainable amplitude. The scale of $s$ may be absorbed into the outgoing weights of each hidden layer, and we write $\sigma(x)=\ind{x\ge 0}$ for the unit-amplitude gate. The key property used throughout is \textit{positive scaling invariance}: $\sigma(cx)=\sigma(x)$ for every $c>0$. Thus, positively rescaling the pre-activation of any threshold neuron leaves its output unchanged.

We represent a neural network as a weighted Directed Acyclic Graph (DAG), $G(V,E,w)$, with input nodes $V_{\mathrm{in}}$ and output nodes $V_{\mathrm{out}}$. An edge $e=(u,v)\in E$ has a weight $w(e)$. %For recurrent networks, the graph is obtained by unrolling over $T$ timesteps, with shared recurrent parameters represented by tied edge weights across time.%

For a fully connected feedforward threshold network of depth $L$ and width $H$, 
$
f^{L,\Theta}(X)
=
\sigma\!\left(\cdots\sigma\!\left(\sigma(XW_1)W_2\right)\cdots W_{L-1}\right)W_L 
$
, the group-norm regularizer of \cite{neyshabur_path-normalized_2016} with norm $p$ is defined over the DAG as
$$
\mu_{p,q}(\Theta)
=
\left(
\sum_{v\in V\setminus V_{\mathrm{in}}}
\Big(
|s_v|^p+\sum_{(u,v)\in E}|w(u,v)|^p
\Big)^{q/p}
\right)^{1/q},
$$
the $\ell_q$ aggregation across nodes of each node's incoming parameters (its weight column and amplitude $s_v$) in $\ell_p$. We adopt the weight-decay specialization $q=2$,
$$
\mu_{p,2}(\Theta)^2
=
\sum_{v\in V\setminus V_{\mathrm{in}}}
\Big(
|s_v|^p+\sum_{(u,v)\in E}|w(u,v)|^p
\Big)^{2/p},
$$
an additive sum over nodes of squared incoming $\ell_p$ blocks; for $p=2$ this is ordinary squared weight decay. The reduction below uses only this additivity across nodes together with the positive scaling invariance of the threshold, recovering the output-layer norm by the additive zero-out of inner magnitudes.
A $K$-parallel threshold network extends such a feedforward network to one consisting of subnetworks $\{G_k(V_k,E_k,w_k)\}_{k=1}^K$ that share input nodes but have disjoint hidden parameters. Its output and weight-decay regularizer are naturally written as
$
f^{L,K,\Theta}(X)=\sum_{k=1}^K f^{L,k,\Theta_k}(X)$ and  
$\mu_{p,2}(\Theta)^2=\sum_{k=1}^K\mu_{p,2}(\Theta_k)^2$ respectively. \begin{theorem}[Reduction to outer norms for $K$-parallel networks]
\label{thm:reduction_parallel}
Let $G(V,E,w)$ be a fully connected $K$-parallel threshold network with weight-decay regularizer $\mu_{p,2}(\Theta)^2=\sum_{k=1}^K\mu_{p,2}(\Theta_k)^2$, where each hidden neuron carries a trainable amplitude. Then the weight-decay regularized training problem is equivalent, under the gauge $|s^{(L-1)}_k|=1$ on the last amplitude of each subnetwork, to the problem in which the regularizer is the outer-layer norm:
\begin{enumerate}
    \item $f^{L,K,\Theta}(X)=f^{L,K,\bar\Theta}(X)$ for all inputs $X$;
    \item the inner weight magnitudes $\{\|W_{l,k}\|_p\}_{l\le L-1}$ and inner amplitudes $\{s^{(l)}_k\}_{l\le L-2}$ are driven to zero additively, leaving
    \[
    \min_\Theta \mathcal L + \tfrac{\beta_{\mathrm{reg}}}{2}\mu_{p,2}(\Theta)^2
    =
    \min_{\Theta:\,|s^{(L-1)}_k|=1}\mathcal L + \beta_{\mathrm{reg}}\sum_{k=1}^K\|W_{L,k}\|_p.
    \]
\end{enumerate}
\end{theorem}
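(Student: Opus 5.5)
The plan is a rescaling argument that uses only the two structural facts singled out before the statement: positive scaling invariance of $\sigma$, and additivity of $\mu_{p,2}^2$ across nodes and across subnetworks. Fix a subnetwork $k$, write its layers $l=1,\dots,L$ with weight blocks $W_{l,k}$ and amplitudes $s^{(l)}_k$, and note that the $k$-th output is $s^{(L-1)}_k\,\sigma(\,\cdot\,)W_{L,k}$.

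\textbf{Step 1: rescaled parameters $\bar\Theta$.} Absorb the sign and magnitude of $s^{(L-1)}_k$ into $W_{L,k}$, which gives the gauge $|s^{(L-1)}_k|=1$. For every inner layer $l\le L-1$ choose $c_l>0$, and multiply the incoming block of layer $l$ (namely $W_{l,k}$ together with the upstream amplitude $s^{(l-1)}_k$ feeding it) by $c_l$. Each pre-activation of layer $l$ is then multiplied by a positive constant. By positive scaling invariance, $\sigma(c\,x)=\sigma(x)$, so every hidden spike pattern, and hence $f^{L,K,\bar\Theta}(X)$, is unchanged for all $X$. This gives item 1. One point needs checking here: an amplitude $s^{(l)}_k$ with $l\le L-2$ enters only the pre-activation of layer $l+1$. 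It can therefore be scaled jointly with $W_{l+1,k}$ by the same factor, so the relevant scalings are one positive factor per inner layer.

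\textbf{Step 2: driving inner magnitudes to zero.} The regularizer splits as
\[
\mu_{p,2}(\Theta_k)^2=\sum_{l\le L-1}\Big(\|W_{l,k}\|_p^p+|s^{(l-1)}_k|^p\Big)^{2/p}+\Big(|s^{(L-1)}_k|^p+\|W_{L,k}\|_p^p\Big)^{2/p}.
\]
Only the last term is tied to the output scale. Let $c_l\to0^+$ for every inner layer. The inner terms then go to $0$ additively while the loss stays fixed. The infimum is therefore governed by the output node's block alone.

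\textbf{Step 3: the output-block trade-off.} Before the gauge is imposed, the product $s^{(L-1)}_kW_{L,k}$ is invariant under $s\mapsto a s$, $W\mapsto W/a$. Minimizing $\tfrac12(a^2+\|W\|_p^2/a^2)$ over $a>0$, or equivalently using AM--GM on the two squared norms, gives $|s|\,\|W\|_p$. With $|s|=1$ this is exactly $\beta_{\mathrm{reg}}\|W_{L,k}\|_p$, and the factor $\tfrac12$ disappears. Summing over $k$ by additivity yields the stated identity.

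\textbf{Step 4: the reverse inequality.} Any feasible point of the right-hand problem lifts, via the Step 3 balancing, to parameters whose objective on the left is at most the right-hand objective plus an arbitrarily small inner contribution. This matches the two values.

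\textbf{Main obstacle.} The hard part is making the ``equivalence'' rigorous. The inner magnitudes reach zero only in the limit, and at exactly zero, $\sigma(0)=1$ collapses every pattern. The equality is therefore one of infima, not attained minima. I would state it as $\inf$, and argue that the loss is constant along the rescaling path, so no discontinuity of $\sigma$ is ever crossed. Care is also needed over whether the $\ell_p$ block mixing amplitude and weights, $(|s|^p+\|W\|^p)^{2/p}$, is what appears in Step 3 rather than separate squares. If the output node's block genuinely couples $s^{(L-1)}$ with $W_L$, I would instead fix $|s^{(L-1)}_k|=1$ by the gauge and absorb it into the penultimate node's incoming block, which is inner and hence sent to zero.
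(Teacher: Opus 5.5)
Your route is the paper's: positive scaling invariance drives the inner blocks to zero without changing $f$, AM--GM balances the surviving amplitude against the readout, and additivity of $\mu_{p,2}^2$ over $k$ finishes. The paper does this once for a single network and then applies it per subnetwork.

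\textbf{Correction: the block assignment in Step 2.} In the paper's $\mu_{p,q}$, the amplitude $s_v$ is grouped with node $v$'s \emph{own} incoming weights. So $s^{(L-1)}_k$ sits in the layer-$(L-1)$ block $\big(|s^{(L-1)}_k|^p+\|W_{L-1,k}\|_p^p\big)^{2/p}$, not in the output block with $W_{L,k}$. Sending $W_{L-1,k}\to 0$ then leaves $|s^{(L-1)}_k|^2$, decoupled from the output block $\|W_{L,k}\|_p^2$. That is exactly the separate-squares expression $\tfrac12\big(|s|^2+\|W\|_p^2\big)$ you and the paper balance in Step 3, and it is valid for every $p$.

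With your grouping, Step 3 does not follow from your own Step 2 when $p\neq 2$. The output block would be $\big(a^p|s^{(L-1)}_k|^p+a^{-p}\|W_{L,k}\|_p^p\big)^{2/p}$, and balancing it gives $2^{2/p-1}\,|s^{(L-1)}_k|\,\|W_{L,k}\|_p$ rather than $|s^{(L-1)}_k|\,\|W_{L,k}\|_p$.

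Your fallback of moving $s^{(L-1)}_k$ into the penultimate (inner) block so that it is ``sent to zero'' is also wrong as stated. $s^{(L-1)}_k$ multiplies the readout linearly, so shrinking it shrinks the output. In that block only the $W_{L-1,k}$ part can be driven to zero; $|s^{(L-1)}_k|^2$ must survive to be traded off against $\|W_{L,k}\|_p^2$. The gauge $|s^{(L-1)}_k|=1$ is imposed only after this trade-off, as a renaming of the product $s^{(L-1)}_kW_{L,k}$.

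Your remark that the identity holds only as an equality of infima is correct. The inner terms are never exactly zero, since $\sigma(0)=1$ would collapse the features. The paper's proof passes over this by writing $\min$.
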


Thus, the original weight-decay regularized training objective
$$
p^L
=
\min_\Theta\ \mathcal L(f^{L,K,\Theta}(X),Y)+\tfrac{\beta_{\mathrm{reg}}}{2}\mu_{p,2}(\Theta)^2
$$
reduces to
$$
p^L
=
\min_\Theta\ \mathcal L(f^{L,K,\Theta}(X),Y)
+
\beta_{\mathrm{reg}}\sum_{k=1}^K\|W_{L,k}\|_p .
$$

This objective is non-convex and admits no direct theoretical 
guarantees on the quality of recovered solutions. With the objective above denoted as the \textit{primal} problem, we construct the \textit{dual} and \textit{bidual} problems and prove zero-duality gap among the three, which implies that the globally optimal solution to the non-convex primal can be recovered by solving the strictly convex bi-dual problem.

\begin{theorem}[bi-dual of parallel feedforward threshold networks]
\label{thm:feedforward_bi_dual}
Let $\phi_k(X;\theta_k)$ denote the final hidden-layer feature map of subnetwork $k$:
$
\phi_k(X;\theta_k)
=
\sigma\!\left(
\cdots
\sigma\!\left(\sigma(XW_{1,k})W_{2,k}\right)
\cdots W_{L-1,k}
\right)
$, and $\phi(X;\theta) = \{\phi_k(X;\theta_k)\}_{k=1}^K$ by applying Theorem \ref{thm:reduction_parallel}. 
Let $d^{L,K}$ denote the reduced weight-decay regularized training problem for a $K$-parallel feedforward threshold network with convex loss $\mathcal L$, with a regularization parameter $\beta_{\mathrm{reg}}>0$ obtained. Then the dual problem is
$$
d^{L,K}
=
\max_{\lambda}\ -\mathcal L^*(-\lambda)
\quad
\mathrm{s.t.}
\quad
\max_{\theta_k\in\bar\Theta_k}
\|\lambda^\top\phi_k(X;\theta_k)\|_{p^*}
\le \beta_{\mathrm{reg}},
\qquad k\in[K].
$$
The corresponding infinite bi-dual problem is
$$
p_{\infty}^{L,K}
=
\min_{\mu}
\left\{
\mathcal L\!\left(
\int_{\theta\in\bar\Theta}
\phi(X;\theta)\,d\mu(\theta),
Y
\right)
+
\beta_{\mathrm{reg}}\|\mu\|_{\mathrm{TV}}
\right\},
$$
where $\mu$ is a finite signed measure over reduced feature maps. Since $\beta_{\mathrm{reg}}>0$, strong duality holds:
$$
d^{L,K}=p_{\infty}^{L,K}.
$$
Moreover, for scalar outputs, if $K\ge K^*$ for some $K^*\le n+1$, then
$$
p^{L,K}=d^{L,K}=p_{\infty}^{L,K}.
$$
For $d_{\mathrm{out}}$ output coordinates, the same Carathéodory argument gives
$K^*\le nd_{\mathrm{out}}+1$ after vectorizing the predictions.
\end{theorem}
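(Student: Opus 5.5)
Starting from the reduced problem supplied by Theorem~\ref{thm:reduction_parallel}, I would write the primal as
\[
p^{L,K}=\min_{\theta_k\in\bar\Theta_k,\,W_{L,k}}\ \mathcal L\Big(\sum_{k=1}^K\phi_k(X;\theta_k)W_{L,k},Y\Big)+\beta_{\mathrm{reg}}\sum_{k=1}^K\|W_{L,k}\|_p ,
\]
where $\bar\Theta_k$ is the gauge-fixed set of inner parameters with $|s^{(L-1)}_k|=1$. The key observation is that, because $\sigma$ is positively scale invariant and takes values in $\{0,1\}$, each $\phi_k(X;\theta_k)$ ranges over a \emph{finite} set of $n\times H$ binary pattern matrices, no matter how the inner weights vary. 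The index set $\bar\Theta$ of the bi-dual can therefore be identified with this finite dictionary of realizable patterns. As a consequence, every maximum over $\theta_k$ is attained and the constraint sets are compact.

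\textbf{Dual.} For fixed $\theta$, I would introduce $Z=\sum_k\phi_kW_{L,k}$ together with a multiplier $\lambda$ and apply the standard Lagrangian/Fenchel step. Minimizing over $Z$ produces $-\mathcal L^*(-\lambda)$. Minimizing over each $W_{L,k}$ gives $\min_W\{\langle\lambda,\phi_kW\rangle+\beta_{\mathrm{reg}}\|W\|_p\}$, which equals $0$ when $\|\lambda^\top\phi_k\|_{p^*}\le\beta_{\mathrm{reg}}$ and $-\infty$ otherwise; this is Hölder duality between $\ell_p$ and $\ell_{p^*}$. Exchanging min and max yields weak duality $p^{L,K}\ge d^{L,K}$. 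Since the $\theta_k$ range over the same set, the $K$ constraints coincide, and $d^{L,K}$ does not depend on $K$.

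\textbf{Bi-dual and strong duality.} I would dualize $d^{L,K}$ again, now viewing the constraint as a semi-infinite family indexed by $\theta\in\bar\Theta$ and attaching a measure multiplier. Splitting $\mu=\mu_+-\mu_-$ and applying Hölder once more turns the constraint $\sup_\theta\|\lambda^\top\phi(X;\theta)\|_{p^*}\le\beta_{\mathrm{reg}}$ into the penalty $\beta_{\mathrm{reg}}\|\mu\|_{\mathrm{TV}}$ on signed or vector-valued measures. This gives $p_\infty^{L,K}$. Because the dictionary is finite, this step is really a finite-dimensional conic convex program. Strong duality $d=p_\infty$ then follows from Slater's condition: $\lambda=0$ is strictly feasible since $0<\beta_{\mathrm{reg}}$, and $\mathcal L$ is convex (I would assume it is proper and lower semicontinuous). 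Boundedness of the dictionary supplies the compactness needed in the Sion/Rockafellar argument.

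\textbf{Finite $K$ via Carathéodory.} Take an optimal $\mu^*$. The achieved prediction $\int\phi\,d\mu^*$ lies in $\mathbb{R}^n$ for scalar outputs, or $\mathbb{R}^{nd_{\mathrm{out}}}$ after vectorization. I would normalize by $\|\mu^*\|_{\mathrm{TV}}$ and consider the pair (prediction, TV mass), which lies in the convex hull of the symmetric atom set $\{\pm\phi(X;\theta)\}$, scaled by the mass. Carathéodory's theorem then produces a representation with at most $n+1$ atoms, respectively $nd_{\mathrm{out}}+1$, with the same prediction and no larger TV norm. Each atom is realized as one subnetwork: set $\theta_k$ to that atom's pattern and $W_{L,k}$ equal to its weight. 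This gives a feasible primal point of value $p_\infty$ whenever $K\ge K^*$, so that
\[
p^{L,K}\le p_\infty^{L,K}=d^{L,K}\le p^{L,K}.
\]

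\textbf{Main obstacle.} The step I expect to be hardest is making the Carathéodory step respect the vector-valued output. A single atom with outer weight $W_{L,k}\in\mathbb{R}^{H\times d_{\mathrm{out}}}$ and penalty $\|W_{L,k}\|_p$ is not a scalar multiple of a fixed dictionary element. I would handle this by treating each (pattern, unit-norm direction) pair as an atom; compactness of the unit $\ell_p$ ball keeps the atom set compact. I would also check that the realized patterns are closed under the parametrization, so that each atom is attained by some gauge-fixed $\theta_k$.
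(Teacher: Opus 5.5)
Your proposal is correct and follows essentially the same route as the paper: Lagrangian/H\"older conjugacy gives the dual, an atomic-gauge (measure) bidual is taken over the atoms $\{\phi(X;\theta)a:\|a\|_p\le 1\}$ with strong duality from Slater at $\lambda=0$, and Carath\'eodory in $\mathbb{R}^n$ (resp.\ $\mathbb{R}^{nd_{\mathrm{out}}}$) realizes each atom as one subnetwork. Your observation that the binary patterns form a finite set adds rigor the paper's feedforward proof leaves implicit: it makes the atom set compact and its convex hull closed, so the optimal prediction is attained as a finite combination (the paper invokes this finiteness only in the recurrent version).
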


The proof is given in Appendix~\ref{app:feedforward_bidual}. The zero-duality-gap equality follows from Slater's condition at $\lambda=0$, and the finite-width equivalence follows from Carathéodory's theorem. The global optimality proven in Theorem~\ref{thm:feedforward_bi_dual} will be used as the basis for the recurrent and spiking neural network reconstructions below.

\iffalse
\paragraph{Hyperplane arrangements.}
\label{para:hyp_arrangements}
For $Z\in\mathbb{R}^{n\times d}$, define
$\mathcal H(Z)=\{\ind{Zu\ge 0}:u\in\mathbb{R}^d\}\subseteq\{0,1\}^n$.
Let $A(Z)$ be the matrix whose columns are the distinct elements of $\mathcal H(Z)$. For a feedforward threshold network, set $D_k^1=A(X)$ and recursively define
$$
D_k^{l+1}
=
A(D_k^l)
:=
\bigcup_{|S|=m_l}A(D_{k,S}^l),
\qquad l\in[L-2],
$$
where $D_{k,S}^l$ is the submatrix indexed by column subset $S$. The final $K$-parallel arrangement matrix is
$D^{L-1}=[D_1^{L-1},\ldots,D_K^{L-1}]$.
See Section~\ref{sec:rnn_section}, Section~\ref{sec:practical_impl} and \cite{ergen_globally_2023} for details.
\fi

\subsection{Threshold Recurrent Neural Networks}
\label{sec:rnn_section}

We consider $K$-parallel recurrent threshold networks with fixed timestep $T$ and depth $L$. For subnetwork $k\in[K]$, layer $l\in[L-1]$, and timestep $t\in[T]$, the hidden state evolves as
$$
H_{k,l}^t
=
\sigma\!\left(
H_{k,l-1}^tP_{k,l,\mathrm{in}}
+
H_{k,l}^{t-1}P_{k,l,\mathrm{rec}}
\right),
\qquad
H_{k,0}^t=X^t .
$$
Unless stated otherwise, $H_{k,l}^0=0$. The final-time readout is
$$
f^{L,T,K,\Theta}(X^{1:T})
=
\sum_{k=1}^K
H_{k,L-1}^T P_{k,\mathrm{out}},
 \label{eq:recurrent_f}$$
where $P_{k,l,\mathrm{in}}\in\mathbb{R}^{m_{l-1,k}\times m_{l,k}}$,
$P_{k,l,\mathrm{rec}}\in\mathbb{R}^{m_{l,k}\times m_{l,k}}$, and
$P_{k,\mathrm{out}}\in\mathbb{R}^{m_{L-1,k}\times d_{\mathrm{out}}}$.

A fixed-timestep recurrent network can be represented as an unrolled DAG with tied edge weights across timesteps. The only difference from the feedforward case is that each hidden neuron $i$ in layer $l$ and subnetwork $k$ has two tied incoming blocks, $P_{k,l,\mathrm{in}}[:,i]$ and $P_{k,l,\mathrm{rec}}[:,i]$, which the reduction scales jointly. Scaling both by a common $\alpha>0$ scales the whole pre-activation by $\alpha$, which the threshold absorbs, so the network function is unchanged. Hence the joint magnitude
$$
a_{k,l,i}=\Big(\|P_{k,l,\mathrm{in}}[:,i]\|_p^p
+
\|P_{k,l,\mathrm{rec}}[:,i]\|_p^p\Big)^{1/p}
$$
may be pushed to zero, playing the role of the incoming weight column in Theorem \ref{thm:reduction_parallel}. With Theorem~\ref{thm:reduction_parallel} and Theorem~\ref{thm:feedforward_bi_dual} applied to the unrolled DAG, we similarly prove strong duality for parallel recurrent threshold networks.

\begin{theorem}[bi-dual of parallel recurrent threshold networks]
\label{thm:recurrent_bi_dual}
Let $p^{L,T,K}$ denote the reduced weight-decay regularized recurrent training problem above, with convex loss $\mathcal L$ and regularization parameter $\beta_{\mathrm{reg}}>0$. Theorem \ref{thm:feedforward_bi_dual} applies directly by substituting $\phi_k(X;\theta_k)$ with $\phi_{T,k}(X^{1:T};\theta_k)$.   Since $\beta_{\mathrm{reg}}>0$, strong duality holds:
$$
d^{L,T,K}=p_{\infty}^{L,T,K}.
$$
Moreover, for scalar outputs, if $K\ge K^*$ for some $K^*\le n+1$, then
$$
p^{L,T,K}=d^{L,T,K}=p_{\infty}^{L,T,K}.
$$
For $d_{\mathrm{out}}$ output coordinates, the same Carathéodory argument gives
$K^*\le nd_{\mathrm{out}}+1$ after vectorizing the predictions.
\end{theorem}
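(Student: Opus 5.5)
We prove Theorem~\ref{thm:recurrent_bi_dual} by reducing the recurrent problem to the feedforward setting of Theorem~\ref{thm:feedforward_bi_dual}. The reduction has three steps: unrolling and rescaling, identifying the reduced problem, and re-running the duality argument. Only the first step involves new content.

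\textbf{Step 1: Unroll and reduce the regularizer.} Fix $T$ and unroll each subnetwork $k$ into a DAG with layers indexed by $(l,t)$. Each hidden node $(k,l,i,t)$ takes as input the edges from $H^t_{k,l-1}$, weighted by $P_{k,l,\mathrm{in}}[:,i]$, and the edges from $H^{t-1}_{k,l}$, weighted by $P_{k,l,\mathrm{rec}}[:,i]$. The weights are tied across $t$.

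The regularizer is charged once per neuron $(k,l,i)$, not once per timestep. It is therefore $\mu_{p,2}(\Theta)^2=\sum_{k,l,i}\bigl(|s_{k,l,i}|^p+a_{k,l,i}^p\bigr)^{2/p}+\sum_k\|P_{k,\mathrm{out}}\|_p^2$.

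Now rescale both incoming blocks by a common $\alpha_{k,l,i}>0$, the same for every $t$. Because $\sigma(cx)=\sigma(x)$ for $c>0$, every $H^t_{k,l}$ is unchanged. This holds by induction on $(l,t)$ in lexicographic order, starting from $H^0_{k,l}=0$.

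Pass amplitudes downstream as in Theorem~\ref{thm:reduction_parallel}. One point needs checking here. An amplitude $s$ on layer $l$ feeds both the next layer's input block and its own recurrent block. We absorb $s$ into both $P_{k,l+1,\mathrm{in}}$ rows and $P_{k,l,\mathrm{rec}}$ rows. The recurrent rescaling is harmless, since it multiplies the pre-activation of the same layer-$l$ neurons, and that is again absorbed by the threshold.

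Repeating the AM--GM/additive zero-out argument of Theorem~\ref{thm:reduction_parallel} with $a_{k,l,i}$ in place of the incoming column norm gives the reduced problem
$$\min_{\Theta:|s^{(L-1)}_k|=1}\mathcal L\Bigl(\textstyle\sum_k \phi_{T,k}(X^{1:T};\theta_k)P_{k,\mathrm{out}},Y\Bigr)+\beta_{\mathrm{reg}}\sum_k\|P_{k,\mathrm{out}}\|_p,$$
where $\phi_{T,k}=H^T_{k,L-1}$.

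\textbf{Step 2: Identify the reduced problem with the feedforward form.} The reduced objective depends on the inner parameters only through the feature matrix $\phi_{T,k}(X^{1:T};\theta_k)\in\{0,1\}^{n\times m}$. It has exactly the form of the reduced objective in Theorem~\ref{thm:feedforward_bi_dual}, with $\phi_k$ replaced by $\phi_{T,k}$.

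The proof of Theorem~\ref{thm:feedforward_bi_dual} uses only two properties of the features. First, the set of realizable features is a fixed set that does not depend on the outer weights. Second, this set is bounded, and in fact finite, since the entries lie in $\{0,1\}$ and $n,m$ are finite. Both properties hold for $\phi_{T,k}$.

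\textbf{Step 3: Re-run the duality argument.} The dual is obtained by writing the Lagrangian in the outer weights and minimizing over them. This yields the dual-norm constraint $\max_{\theta_k}\|\lambda^\top\phi_{T,k}\|_{p^*}\le\beta_{\mathrm{reg}}$.

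Strong duality $d=p_\infty$ follows because the bi-dual is a convex problem over measures supported on a finite set, i.e.\ a finite-dimensional convex program. Slater's condition holds at $\lambda=0$, since $\beta_{\mathrm{reg}}>0$ makes the constraint strict there.

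For finite width, the bi-dual prediction lies in the convex cone generated by the finitely many vectors $\lambda$-independent $\phi_{T,k}$ columns, with TV cost. Carath\'eodory's theorem in $\mathbb R^{n}$ (respectively $\mathbb R^{nd_{\mathrm{out}}}$) then gives an optimal measure with at most $n+1$ (respectively $nd_{\mathrm{out}}+1$) atoms. Each atom is realized by one subnetwork, so $p^{L,T,K}\le p_\infty$ once $K\ge K^*$. Combined with weak duality $p\ge d$, this gives the chain of equalities.

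\textbf{Main obstacle.} Step 1 is the expected difficulty: the scaling reduction must be consistent with tied weights. A single rescaling per neuron has to work simultaneously for all $T$ timesteps. Amplitudes also feed into recurrent edges, which could create a loop when they are pushed forward. The plan is to show by explicit induction over $(l,t)$ that the per-neuron rescaling $\alpha_{k,l,i}$, applied jointly to both blocks, commutes with the unrolling. This lets the feedforward zero-out go through unchanged.
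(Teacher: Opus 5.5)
Your proposal is correct and follows essentially the same route as the paper: the joint per-neuron rescaling of the tied input and recurrent blocks on the unrolled DAG (the paper's Appendix~C.1), then the Lagrangian over the outer weights giving the dual-norm constraint, Slater's condition at $\lambda=0$, and Carath\'eodory's theorem applied to the finite set of binary final-layer patterns. You also make explicit two points the paper leaves implicit: how amplitudes that feed recurrent edges are handled, and closing the chain of equalities through weak duality $p\ge d$ (the paper instead simply asserts $p=p_\infty$ once the bi-dual prediction is realized).
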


The proof is given in Appendix~\ref{app:recurrent_bidual}. The strong duality in Theorem~\ref{thm:recurrent_bi_dual} is based on an infinite number of subnetworks. To obtain a finite convex formulation, we adopt the practice in \cite{ergen_globally_2023} to obtain the \textit{spike dictionary}, denoted as $D^{L-1,T} \in \{0,1\}^{n \times P^{L-1,T}}$ whose $P^{L-1,T}$ columns are the distinct spike activation 
patterns at layer $L-1$, timestep $T$. We call this process \textit{witness generation} and the details are provided in Appendix~\ref{app:recurrent_arrangements}. Note that $P^{L-1,T}$ is finite because every column of 
$D^{L-1,T}$ lies in $\{0,1\}^n$.

\begin{theorem}[Finite convex formulation for recurrent threshold networks]
\label{thm:finite_convex_L_T}
For weight decay with $p=2$, let
$D^{L-1,T}\in\{0,1\}^{n\times P^{L-1,T}}$
be the complete witnessed arrangement matrix at the final hidden layer and final timestep. Then the reduced recurrent training problem admits the finite convex bi-dual objective 
$$
\widetilde p^{L,T,K}
=
\min_{\widetilde w\in\mathbb{R}^{P^{L-1,T}}}
\mathcal L(D^{L-1,T}\widetilde w,Y)
+
\frac{\beta_{\mathrm{reg}}}{\sqrt{m_{L-1}}}\|\widetilde w\|_1 .
$$
Moreover, for scalar outputs,
$$
\widetilde p^{L,T,K}=p_{\infty}^{L,T,K}
$$
whenever $K\ge K^*$ for some $K^*\le n+1$. For $d_{\mathrm{out}}$ output coordinates, the corresponding sufficient bound is
$K^*\le nd_{\mathrm{out}}+1$.
\end{theorem}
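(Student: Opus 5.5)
The strategy is to start from the infinite bi-dual of Theorem~\ref{thm:recurrent_bi_dual} and observe that the reduced feature map $\phi_{T,k}(X^{1:T};\theta_k)=H^T_{k,L-1}$ takes only finitely many values on the training set. By the reduction of Theorem~\ref{thm:reduction_parallel} applied to the unrolled DAG, and with the gauge $|s^{(L-1)}_k|=1$, each final-layer neuron output at timestep $T$ is a column of $\{0,1\}^n$. By definition of the complete witnessed arrangement matrix, the set of realizable such columns, over all $\theta$ in the reduced parameter set $\bar\Theta$, is exactly the column set of $D^{L-1,T}$. I would state this as a lemma: completeness of witness generation (Appendix~\ref{app:recurrent_arrangements}) gives $\{\phi(X^{1:T};\theta):\theta\in\bar\Theta\}=\{D^{L-1,T}_{:,j}:j\in[P^{L-1,T}]\}$, with the vector-valued readout handled coordinatewise.

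Given this, I would collapse the measure. For any finite signed measure $\mu$ on $\bar\Theta$, partition $\bar\Theta$ into the preimages $\Theta_j=\{\theta:\phi(X^{1:T};\theta)=D^{L-1,T}_{:,j}\}$ and set $\widetilde w_j=\mu(\Theta_j)$. Then $\int\phi\,d\mu=D^{L-1,T}\widetilde w$ and $\|\widetilde w\|_1\le\|\mu\|_{\mathrm{TV}}$. Conversely, placing point masses $\widetilde w_j\delta_{\theta_j}$ at one witness $\theta_j\in\Theta_j$ for each $j$ attains $\|\mu\|_{\mathrm{TV}}=\|\widetilde w\|_1$. Hence the infimum over measures equals the finite minimization over $\widetilde w\in\mathbb{R}^{P^{L-1,T}}$, and convexity follows because $\mathcal L$ is convex and the map $\widetilde w\mapsto D^{L-1,T}\widetilde w$ is linear. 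Attainment follows from coercivity of the $\ell_1$ term, since $\beta_{\mathrm{reg}}>0$ and $\mathcal L$ is bounded below.

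Next I would account for the constant $1/\sqrt{m_{L-1}}$. With $p=2$, a subnetwork whose $m_{L-1}$ final-layer neurons realize the same pattern splits its readout mass evenly across them. The term $\|W_{L,k}\|_2$ for a weight spread evenly over $m_{L-1}$ identical columns equals $|\widetilde w_j|/\sqrt{m_{L-1}}$, and Cauchy--Schwarz shows that even splitting is optimal. This mirrors the normalization in \cite{ergen_globally_2023}, and I would reproduce their argument on the unrolled graph, taking care with the tied recurrent blocks. Joint scaling of $(P_{k,l,\mathrm{in}},P_{k,l,\mathrm{rec}})$ is the only change, and it preserves patterns by positive scaling invariance.

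Finally, I would handle finite width. The optimal $\widetilde w^*$ (or an optimal $\mu$) yields a prediction vector in $\mathbb{R}^n$, or in $\mathbb{R}^{nd_{\mathrm{out}}}$ after vectorizing. Carathéodory's theorem, as invoked in Theorem~\ref{thm:recurrent_bi_dual}, gives a representation using at most $n+1$ atoms (respectively $nd_{\mathrm{out}}+1$) without increasing the $\ell_1$ norm. Each atom is realized by one subnetwork, so $K\ge K^*$ suffices to match $p_\infty^{L,T,K}$.

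I expect the main obstacle to be the lemma in the first paragraph. Completeness of the witnessed dictionary requires every pattern realizable by the recurrent dynamics, with weights shared across timesteps, to be produced by witness generation. Patterns at time $T$ depend on the whole trajectory, not on a single hyperplane arrangement of a fixed matrix. I would first try induction over $(l,t)$ in the unrolled order: the pre-activation at $(l,t)$ is linear in the concatenated binary matrix $[H^t_{l-1},H^{t-1}_l]$. The enumeration then reduces to hyperplane arrangements over finitely many such concatenations, with the tying constraint only shrinking the realizable set. Since the dictionary may therefore over-include patterns, I would define it as exactly the realizable set so that equality holds.
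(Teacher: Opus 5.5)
Your proof is correct in outline but takes a genuinely different route from the paper's. The paper (Appendix~\ref{app:finite_convex_LT}) works entirely on the dual side:
\begin{itemize}
\item it replaces the semi-infinite constraint of Theorem~\ref{thm:recurrent_bi_dual_appendix} by $\max_i \sqrt{m_{L-1}}\,|\lambda^\top d_i|\le\beta_{\mathrm{reg}}$, where the $\sqrt{m_{L-1}}$ comes from maximizing the $\ell_2$ norm when all final-layer columns equal one $d_i$;
\item it splits this into $2P^{L-1,T}$ linear inequalities and checks Slater at $\lambda=0$;
\item it dualizes again with multipliers $\gamma^\pm\ge 0$ and the Fenchel identity, uses $\gamma_i^++\gamma_i^-=|w_i|$, and rescales $\widetilde w=-\sqrt{m_{L-1}}\,w$;
\item it takes finite width directly from Theorem~\ref{thm:recurrent_bi_dual_appendix}.
\end{itemize}

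You stay on the primal/measure side. You push the measure forward onto finitely many pattern classes, use point masses for achievability, get the constant from Cauchy--Schwarz plus even splitting, and shrink the support by Carath\'eodory. Your route avoids a second dualization. It gives a direct comparison between the finite-width primal and the Lasso, together with the explicit reconstruction $P_{k,\mathrm{out}}=\widetilde w_{i_k}\mathbf 1/m_{L-1}$ that the paper only states later in Algorithm~\ref{alg:lif_reconstruction}. It also rightly isolates completeness of $D^{L-1,T}$ in both directions as the crux. The paper states only the inclusion of realizable columns in $D^{L-1,T}$ and uses the converse silently in its maximization step. The paper's route, in exchange, shows the Lasso to be the exact Lagrange dual of a finitely constrained problem, so its link to $d^{L,T,K}$ comes for free.

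Two points need tightening.

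\textbf{The constant.} Your collapse treats $\phi$ as a single column of unit mass. As written it therefore gives $p_\infty$ with penalty $\beta_{\mathrm{reg}}\|\widetilde w\|_1$, which contradicts the $\beta_{\mathrm{reg}}/\sqrt{m_{L-1}}$ you then derive for the primal. Instead, collapse the paper's atoms $\phi_T(X^{1:T};\theta)a$ with $\|a\|_2\le 1$:
\begin{itemize}
\item by Cauchy--Schwarz every atom lies in $\mathrm{conv}\{0,\pm\sqrt{m_{L-1}}\,d_j\}$;
\item $\pm\sqrt{m_{L-1}}\,d_j$ is itself an atom (all columns equal $d_j$, $a=\mathbf 1/\sqrt{m_{L-1}}$);
\item hence the gauge of $z$ is $\min\{\|\widetilde w\|_1/\sqrt{m_{L-1}} : D^{L-1,T}\widetilde w=z\}$.
\end{itemize}
The even-split step then sits inside the collapse rather than being appended as a rescaling.

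\textbf{Simultaneous realizability.} That achievability step needs all $m_{L-1}$ final-layer neurons to emit $d_j$ at time $T$ at the same time. The paper makes the same assumption when it says the final-layer neurons are ``independently parameterized''. With a dense $P_{L-1,\mathrm{rec}}$ this is not automatic. Copying one neuron's input and recurrent columns to the others changes $H^{T-1}_{L-1}$, and hence possibly the time-$T$ pattern. State your completeness lemma for this simultaneous realizability, not just for single-neuron realizability. It does hold for diagonal recurrence, as in the LIF case.
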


The proof is given in Appendix~\ref{app:finite_convex_LT}. 

\subsection{Spiking Neural Networks as Structured Recurrent Threshold Networks}
\label{sec:snn_section}
In this section, we provide a perspective to view an SNN as a structured recurrent threshold network, which allows us obtain the globally optimal solution to an SNN under the similar weight-decay regularized objective as defined in \S\ref{sec:notation_prelim}, via Theorem \ref{thm:recurrent_bi_dual} and Theorem \ref{thm:finite_convex_L_T}.

We use the threshold-centered membrane convention
$S_l^t=\sigma(U_l^t)=\ind{U_l^t\ge 0}$. For layer $l$ and timestep $t$, the LIF recurrence is
$$
U_l^t
=
S_{l-1}^tP_{l,\mathrm{in}}
+
U_l^{t-1}B_l
-
S_l^{t-1}R_l,
\qquad
S_l^t=\sigma(U_l^t),
$$
where $B_l=\operatorname{Diag}(\beta_l)$ where $\beta_l < 1$ is the leak coefficient,
$R_l=\operatorname{Diag}(U_{\mathrm{thr}}^l)$ is the soft-reset matrix and $U^l_{t}, U^{t-1}_{l}$ represent the membrane potentials of the neuron. Equivalently,
$$
U_l^t
=
S_{l-1}^tP_{l,\mathrm{in}}
+
\begin{bmatrix}
U_l^{t-1} & S_l^{t-1}
\end{bmatrix}
\begin{bmatrix}
B_l\\
-R_l
\end{bmatrix}.
$$
Thus, an LIF-SNN is a recurrent threshold network whose recurrent matrix is constrained to the leak-reset block form above, rather than being an arbitrary dense recurrent matrix.

\begin{lemma}[LIF membrane rescaling]
\label{lem:lif_membrane_rescaling}
Let $A_l=\operatorname{Diag}(a_{l,1},\ldots,a_{l,m_l})$ with $a_{l,i}>0$. Define
$$
\bar U_l^t=U_l^tA_l,
\qquad
\bar P_{l,\mathrm{in}}=P_{l,\mathrm{in}}A_l,
\qquad
\bar R_l=R_lA_l,
\qquad
\bar B_l=A_l^{-1}B_lA_l.
$$
Then the rescaled variables satisfy
$$
\bar U_l^t
=
S_{l-1}^t\bar P_{l,\mathrm{in}}
+
\bar U_l^{t-1}\bar B_l
-
S_l^{t-1}\bar R_l.
$$
Moreover, if $B_l$ is diagonal, then $\bar B_l=B_l$, and the spike outputs are unchanged:
$$
\sigma(\bar U_l^t)=\sigma(U_l^tA_l)=\sigma(U_l^t).
$$
\end{lemma}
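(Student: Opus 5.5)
The proof is a direct algebraic verification: right-multiply the LIF recurrence by $A_l$, then use that $A_l$ is a positive diagonal matrix. Starting from
$$
U_l^t = S_{l-1}^tP_{l,\mathrm{in}} + U_l^{t-1}B_l - S_l^{t-1}R_l,
$$
multiplying on the right by $A_l$ gives
$$
U_l^tA_l = S_{l-1}^tP_{l,\mathrm{in}}A_l + U_l^{t-1}B_lA_l - S_l^{t-1}R_lA_l.
$$
The first and last terms are exactly $S_{l-1}^t\bar P_{l,\mathrm{in}}$ and $S_l^{t-1}\bar R_l$ by definition. For the middle term I insert the identity $A_lA_l^{-1}$, which is legitimate because every $a_{l,i}>0$ makes $A_l$ invertible:
$$
U_l^{t-1}B_lA_l = U_l^{t-1}A_l\,A_l^{-1}B_lA_l = \bar U_l^{t-1}\bar B_l.
$$
This yields the rescaled recurrence for every $t\ge 1$. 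The base case $t=1$ is consistent, since $U_l^0=0$ gives $\bar U_l^0=0$, and $S_l^0$ is untouched by the rescaling.

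For the second claim, diagonal matrices commute, so if $B_l$ is diagonal then $A_l^{-1}B_lA_l=B_lA_l^{-1}A_l=B_l$. For the spike outputs, note that right-multiplying by a diagonal $A_l$ scales column $i$ of $U_l^t$ (neuron $i$ across all samples) by $a_{l,i}>0$. Entrywise we then have $(U_l^tA_l)_{j,i}=a_{l,i}(U_l^t)_{j,i}$. Applying the positive scaling invariance $\sigma(cx)=\sigma(x)$ for $c>0$ from \S\ref{sec:notation_prelim} entrywise gives $\sigma(\bar U_l^t)=\sigma(U_l^t)$.

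One subtlety needs care. The spikes $S_l^{t-1}$ and $S_{l-1}^t$ that appear in the rescaled recurrence must be the same spikes as in the original network; otherwise the rescaled system is not self-consistent. I would handle this with an induction on $t$, and then on $l$ if the rescaling is applied to several layers. Assume $\sigma(\bar U_l^{t-1})=S_l^{t-1}$ and that the inputs $S_{l-1}^t$ are unchanged. Then $\bar U_l^t$ computed from the rescaled recurrence equals $U_l^tA_l$, so $\sigma(\bar U_l^t)=S_l^t$. This closes the induction. It also shows that rescaling layer $l$ leaves the inputs to layer $l+1$ unchanged, so the whole network output is invariant. I expect no real obstacle beyond this bookkeeping: the algebra is immediate, and the only point to state explicitly is that positivity of $A_l$ is what keeps it invertible and preserves the sign of every pre-activation.
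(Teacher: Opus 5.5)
Your proposal is correct and follows the paper's proof: right-multiply the recurrence by $A_l$, insert $A_lA_l^{-1}$ in the leak term, use that diagonal matrices commute, and apply positive scaling invariance of $\sigma$ entrywise. Your induction on $t$ and $l$, which checks that the rescaled network generates the same spikes, is bookkeeping the paper leaves implicit; note that you do not need $U_l^0=0$, because the lemma's definition $\bar U_l^0=U_l^0A_l$ already covers a nonzero initial membrane, which the paper's LIF witness $\omega_l=(B_l,R_l,P_{l,\mathrm{in}},U_l^0)$ allows.
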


The proof is given in Appendix~\ref{app:lif_membrane_rescaling}. By Lemma~\ref{lem:lif_membrane_rescaling} the leak $\beta_l\in B_l$ is a bounded structural recurrent coefficient, not a trainable weight, and is unaffected by rescaling. The trainable parameters $P_{l,\mathrm{in}}[:,i]$ and $U_{\mathrm{thr},i}^l$ feed only $U_l^t$, so by positive scaling invariance their magnitudes and the inner amplitudes may be pushed to zero. Applying Theorem~\ref{thm:reduction_parallel} on the unrolled LIF DAG,
$$
\min_\Theta \mathcal L + \tfrac{\beta_{\mathrm{reg}}}{2}\mu_{p,2}(\Theta)^2
=
\min_{\Theta:\,|s^{(L-1)}_k|=1}\mathcal L + \beta_{\mathrm{reg}}\sum_{k=1}^K\|P_{k,\mathrm{out}}\|_p.
$$
The reduction is proved in Appendix~\ref{app:lif_path_reduction}. Theorem~\ref{thm:recurrent_bi_dual} then applies by substituting $\phi_{T,k}(X^{1:T};\theta_k)$ with $S_{k,L-1}^T$; the construction and finiteness are detailed in Appendix \ref{app:snn_finiteness}.

\begin{theorem}[Finite convex formulation for LIF-SNNs]
\label{thm:finite_convex_lif_snn}
For weight decay with $p=2$, let
$D_{\mathrm{SNN}}^{L-1,T}\in\{0,1\}^{n\times P_{\mathrm{SNN}}^{L-1,T}}$
be the complete SNN witnessed dictionary at the final hidden layer and final timestep. Then the reduced SNN training problem with weight-decay regularization admits the similar finite convex program as defined in Theorem \ref{thm:finite_convex_L_T}
$$
\widetilde P_{\mathrm{SNN}}^{L,T}
=
\min_{\widetilde w\in\mathbb{R}^{P_{\mathrm{SNN}}^{L-1,T}}}
\mathcal L(D_{\mathrm{SNN}}^{L-1,T}\widetilde w,Y)
+
\frac{\beta_{\mathrm{reg}}}{\sqrt{m_{L-1}}}\|\widetilde w\|_1 .
$$
For scalar outputs,
$$
\widetilde P_{\mathrm{SNN}}^{L,T}
=
P_{\mathrm{SNN}}^{L,T,K}
$$
whenever $K\ge K^*$ for some $K^*\le n+1$. For $d_{\mathrm{out}}$ output coordinates, the sufficient bound is
$K^*\le nd_{\mathrm{out}}+1$ after vectorizing the predictions.
\end{theorem}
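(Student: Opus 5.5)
The plan is to reduce Theorem~\ref{thm:finite_convex_lif_snn} to Theorem~\ref{thm:finite_convex_L_T} by treating the LIF-SNN as a structured parallel recurrent threshold network. Three things must hold: the weight-decay reduction, the bi-dual characterization, and the finite dictionary representation. Each must survive the restriction of the recurrent matrix to the leak--reset block form.

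\textbf{Step 1: reduce the regularizer to the output norm.} Lemma~\ref{lem:lif_membrane_rescaling} shows that for each hidden neuron $i$, rescaling the joint block $(P_{k,l,\mathrm{in}}[:,i], U^l_{\mathrm{thr},i})$ by $a_{l,i}>0$ leaves the spike trains $S^t_{k,l}$ unchanged. The diagonal leak $B_l$ is not altered by the rescaling. Hence the argument of Theorem~\ref{thm:reduction_parallel} applies on the unrolled LIF DAG, which is the reduction displayed just before the theorem. The regularized problem therefore equals
\[
\min_{\Theta:\,|s^{(L-1)}_k|=1}\mathcal L+\beta_{\mathrm{reg}}\sum_k\|P_{k,\mathrm{out}}\|_2 .
\]

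\textbf{Step 2: obtain the bi-dual over the SNN feature class.} I will set $\phi_{T,k}(X^{1:T};\theta_k)=S^T_{k,L-1}$, with $\theta_k$ ranging over LIF parameters with unit-normalized inner blocks and fixed leaks. I then repeat the dual construction of Theorem~\ref{thm:recurrent_bi_dual} verbatim with this feature class in place of the dense recurrent one. The proof only uses two facts: $\lambda=0$ is strictly feasible because $\beta_{\mathrm{reg}}>0$ (Slater), and the feature set is a bounded subset of $\{0,1\}^n$. Both hold for the SNN class. This gives $d=p_\infty$. Carathéodory applied to the at most $(n d_{\mathrm{out}})$-dimensional prediction vector then gives a finite representing measure with at most $n d_{\mathrm{out}}+1$ atoms, which yields $p=d=p_\infty$ for $K\ge K^*$.

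\textbf{Step 3: pass to the finite convex program.} Since each feature $S^T_{k,L-1}$ is a column in $\{0,1\}^n$, the integral $\int\phi\,d\mu$ depends on $\mu$ only through its pushforward onto the finitely many realizable patterns. Collecting the columns of $D^{L-1,T}_{\mathrm{SNN}}$, any measure collapses to a weight vector $\widetilde w$ with $\|\widetilde w\|_1\le\|\mu\|_{\mathrm{TV}}$. Conversely, each $\widetilde w$ is realized by atoms at witness parameters. For $p=2$, the scaling $1/\sqrt{m_{L-1}}$ comes from the same bookkeeping as in Theorem~\ref{thm:finite_convex_L_T}: the $m_{L-1}$ unit-amplitude neurons share the output norm. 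I will cite that theorem's computation rather than redo it.

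\textbf{Main obstacle: completeness of the dictionary.} The hardest step is showing that $D^{L-1,T}_{\mathrm{SNN}}$ contains exactly the patterns realizable by \emph{constrained} LIF parameters. A column realizable by a dense recurrent network need not be realizable by an LIF network, so I cannot simply reuse the dictionary $D^{L-1,T}$ of Theorem~\ref{thm:finite_convex_L_T}.

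I would handle this by defining the dictionary intrinsically as the image set $\{S^T_{k,L-1}(\theta):\theta\in\bar\Theta_{\mathrm{SNN}}\}\subseteq\{0,1\}^n$. This set is automatically finite, so the collapse argument of Step 3 goes through. Witness generation (Appendix~\ref{app:snn_finiteness}) is then only needed to enumerate this set, not to justify the equality.

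A subtlety remains for the Carathéodory atoms: they must be realizable by one subnetwork of the LIF form. This holds because each subnetwork is itself an LIF network, and the set of its last-layer patterns is the same dictionary.
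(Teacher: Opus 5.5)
Your plan is essentially the paper's argument: membrane rescaling (Lemma~\ref{lem:lif_membrane_rescaling}) and the outer-norm reduction, then Theorem~\ref{thm:recurrent_bi_dual} with $\phi_{T,k}=S^T_{k,L-1}$ via Slater at $\lambda=0$ and Carath\'eodory, then a finite dictionary defined as the image set of realizable spike columns; the only variation is that you collapse the bidual measure onto dictionary columns, whereas Appendix~\ref{app:finite_convex_LT} reduces the dual constraint to $\sqrt{m_{L-1}}\max_i|\lambda^\top d_i|\le\beta_{\mathrm{reg}}$ and dualizes again. In your collapse the constant is off: with the paper's atoms $\phi(\theta)a$, $\|a\|_2\le 1$, Cauchy--Schwarz gives $\|\widetilde w\|_1\le\sqrt{m_{L-1}}\,\|\mu\|_{\mathrm{TV}}$ rather than $\le\|\mu\|_{\mathrm{TV}}$, and the matching bound needs all $m_{L-1}$ final-layer neurons to emit one common column, which holds for LIF because the diagonal $B_{L-1},R_{L-1}$ decouple neurons within a layer (copy one neuron's $(P_{L-1,\mathrm{in}}[:,i],\beta_i,U_{\mathrm{thr},i})$ across the layer)---this is exactly the ``independently parameterized'' step you are importing from Theorem~\ref{thm:finite_convex_L_T}.
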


\subsection{Parameter Reconstruction for Spiking Neural Network Training}
\label{sec:practical_impl}

We now demonstrate how the optimal weights of an SNN can be obtained from the solution of the convex program in Theorem \ref{thm:finite_convex_lif_snn}. From Carathéodory's argument as stated in \S\ref{sec:notation_prelim}, the solution to this convex problem, $\widetilde w^*$, has $n + 1$ non-zero coefficients at most that form a linear combination of spike activations $d_i$ which belongs to the LIF witnessed dictionary $D_{\mathrm{SNN}}^{L-1,T}$. Since $K^* \le n + 1$ and $K \ge K^*$ we can assign each of these activations to each subnetwork $k \in [K]$. The associated witness $\omega_i$ is then initialized as the weights of the subnetwork $k$. 

The exact algorithm applies to the complete witnessed dictionary and gives the globally optimal solution to all model parameters. However, in practice, an exhaustive enumeration of all spike activations is not tractable as the computational complexity grows combinatorially, as derived in the appendix \ref{app:snn_training_time}. So, we use fixed or pretrained witnesses. We generate fixed witnesses by sampling all parameters of the SNN except for the last layer from a Gaussian distribution similar to the practice in \cite{ergen_globally_2023}, and generate pretrained witnesses by directly transferring the model parameters of a pretrained SNN trained on the same task. Once these witnesses are fixed, the induced spike dictionary is tractable in size, and the convex construction stage is globally optimal over this spike dictionary, not necessarily for the complete intractable dictionary. 

\iffalse
\cheng{Overall, sec 3 is too dense. It would benefit from adding some informal explanations here and there to allow a reader to quickly get the story without diving into all the technical details. If needed, you may move some dense technical content to Appendix (e.g. to leave space for more explanation or argument or articulation of novelty and contributions. }
\fi

\section{Experiments}
\iffalse
\cheng{Add a small introduction at the beginning to highlight the key research questions you'd like to use experiments to answer. E.g., "The main purpose of our experiments is to investigate .. or to address the following questions... or to examine the hypothesis that ..... .To this end, we design our experiment as follows. [briefly explain the high level exp design]. Alternatively, you could say, below we first describe the tasks, the methods we compared , .... However, this kind of introduction is not as good as using research questions to drive the whole experiment section. So you start with the research question you want to answer, then say how you design the experiments to answer this question and provide a justification for the dataset, measures, procedure used to proactively "take care" of criticisms about the soundness of the exp design, and then discuss results to extract answers to your question. This might lead to new questions (e.g., why do the results look like this? It's not what we expected. so we further look into this question by doing additional analysis or additional experiments....  }
\fi

\label{sec:experiments}

In this section, we empirically evaluate our parameter reconstruction algorithm for SNN training. We check that the global optimality is achieved by examining the gap between primal and dual objectives, and investigate when such global optimality on training data is translated into better generalization performance than surrogate-gradient baselines. We further examine the two-part training paradigm proposed in \S\ref{sec:practical_impl} and check if better witness generation empirically improves model performance.

\iffalse
The main purpose of our experiments is to answer three questions. First, does the convex reconstruction stage reach the global optimum of the sampled convex problem in practice? Second, when this optimum is reached, does it translate into better SNN performance than surrogate-gradient training? Third, when global optimality does not guarantee generalization, is the bottleneck explained by the spike dictionary used for reconstruction? We design the experiments around these questions. Arithmetic addition tests length generalization and dictionary quality; \textsc{first-last-XOR} isolates depth and timestep effects in a controlled binary setting; and \textsc{MNIST-Seq} tests whether the same trends persist on non-synthetic sequential inputs. 
\fi

\textbf{Tasks.} Our main task is arithmetic addition under sequence-length in-distribution (ID) and out-of-distribution (OOD) evaluation, a standard probe of algorithmic length generalization in sequence models~\cite{lee2023teaching,kazemnejad2023impact,mcleish2024transformers,anil2022exploring}. For two $n$-digit base-$b$ operands, the input space grows exponentially as $b^{2n}$, so long-sequence OOD performance cannot be explained only by memorizing short training examples given limited model capacity. We also evaluate on \textsc{first-last-XOR} and \textsc{MNIST-Seq} \cite{lecun2010mnist} as representative tests on synthetic and non-synthetic tasks respectively.

\textbf{Method.} We evaluate two variants of our parameter reconstruction method: \textsc{CVX} and \textsc{SG-CVX}. CVX generates witnesses via random sampling from a Gaussian Distribution, while \textsc{SG-CVX} uses the spike dictionary from an SNN trained with surrogate gradients on the target task. Then both \textsc{CVX} and \textsc{SG-CVX} optimally reconstruct output coefficients over their respective spike dictionaries. We compare against two surrogate-gradient baselines: \textsc{SG}, the standard surrogate-gradient training method, and \textsc{SG-SG}, which continues surrogate-gradient training from an \textsc{SG} checkpoint as a fair comparison to \textsc{SG-CVX}. Algorithmic details are given in Appendix~\ref{app:training_algorithms}.

\textbf{Main Results.} Table~\ref{tab:duality_gap} verifies the finite convex reconstruction problem is solved to global optimality. Across the number of subnetworks, the primal--dual gap is consistently below $10^{-6}$  on \textsc{first-last-XOR}.

% \paragraph{Main results.}
% Table~\ref{tab:main} and \ref{tab:ood_methods} report the main experiment results. On arithmetic addition, CVX-derived methods outperform the best SG-variants across bases and model configurations under ID settings, and show consistent and significant advantages in OOD cases across all sequence lengths. On \textsc{first-last-XOR}, CVX gives the largest performance gain at moderate timesteps: at $L=3,T=6$, CVX reaches $1.000$ accuracy versus $0.746$ from SG, and at $L=10,T=6$ it reaches $0.973$ accuracy while the \textsc{SG} baseline is degraded into random guess. On \textsc{MNIST-Seq}, the complete $T=2$ sensory-validation setting shows strong performance of CVX with shallow models and a nontrivial advantage at models with $L=10$, where surrogate-gradient training collapses to chance.

\iffalse
\paragraph{Summary.}
Across tasks, the results support the spiked dictionary training view introduced in \S\ref{sec:practical_impl}. Convex reconstruction removes a major output-optimization barrier once the spike dictionary is fixed; this is clearest in the XOR depth setting. At the same time, global optimality over a sampled dictionary is not itself a guarantee of OOD generalization; the addition results show that length extrapolation depends strongly on the witness family used to generate the dictionary. We analyze these effects through OOD, depth, timestep, and sample-size ablations in \S\ref{sec:analysis}.
\fi

\begin{table}[H]
  \centering
  \small
  \caption{%
    Empirical optimality of the convex reconstruction solve on
    \textsc{first-last-XOR} ($T=11$, $L=3$).
    We sweep number of subnetworks $K$ at fixed total width. The primal and dual values are rounded to four decimal places and the precise gaps are below $10^{-6}$, numerically verifying global optimality of the
    sampled convex problem.}
  \label{tab:duality_gap}
  \setlength{\tabcolsep}{6pt}
  \begin{tabular}{@{}ccrrr@{}}
    \toprule
    $K$ & Test Acc. (\%) &  Primal & Dual & Abs. gap \\
    \midrule
     2 & 74.71 &  0.9817 & 0.9817 & $7.01\!\times\!10^{-8}$ \\
    16 & 82.23 &  0.8370 & 0.8370 & $2.94\!\times\!10^{-7}$ \\
    32 & 76.07 &  0.9459 & 0.9459 & $9.55\!\times\!10^{-8}$ \\
    64 & 86.43 &  0.8734 & 0.8734 & $2.65\!\times\!10^{-7}$ \\
    \bottomrule
  \end{tabular}
\end{table}

Tables~\ref{tab:main} and~\ref{tab:ood_methods} report the main empirical results. In ID settings, CVX-derived methods outperform the best SG-derived variants across addition bases and model configurations on \textsc{Addition}. On \textsc{first-last-XOR}, the largest gap appears at moderate timestep and large depth: at $L=10,T=6$, CVX reaches $0.973$ accuracy while SG is near chance at $0.501$. On \textsc{MNIST-Seq}, CVX remains competitive at shallow depth and retains a nontrivial advantage at $L=10$, where SG collapses to chance. 

The OOD results on \textsc{Addition} confirm the necessity and effectiveness of the two-part training paradigm. As shown in Table~\ref{tab:ood_methods}, \textsc{CVX} is strong near the training distribution and degrades dramatically on longer sequence test, while \textsc{SG-CVX} results in much more stable OOD performances even on test lengths that are 10 times the training lengths.

\begin{table}[t]
  \centering
  \small
  \caption{%
    Experiment results on synthetic and non-synthetic tasks.
    Each cell reports test accuracy of the best variant within its method class.
    Best CVX-variant selects the best performance metrics across \{\textsc{CVX},
    \textsc{SG-CVX}\} and
    Best SG-variant selects across \{\textsc{SG},
    \textsc{SG-SG}\}. Addition tasks report in-distribution
    token-wise accuracy at $n=5$.}
  \label{tab:main}
  \setlength{\tabcolsep}{8pt}
  \renewcommand{\arraystretch}{1.15}
  \begin{tabular}{@{}llcc@{}}
    \toprule
    \textbf{Task} & \textbf{Timestep} & \textbf{Best CVX-variant} & \textbf{Best SG-variant} \\
    \midrule
    \multicolumn{4}{@{}l}{\textit{Depth $L=3$}} \\
    \midrule
    \textsc{Addition}\textsubscript{Base-2}  & $T=6$  & \textbf{0.954} & 0.738 \\
    \textsc{Addition}\textsubscript{Base-3} & $T=6$  & \textbf{0.363} & 0.212 \\
    \textsc{Addition}\textsubscript{Base-5} & $T=6$  & \textbf{0.235} & 0.189 \\
    \textsc{first-last-XOR}          & $T=6$  & \textbf{1.000} & 0.746 \\
    \textsc{first-last-XOR}          & $T=14$ & \textbf{0.689} & 0.641 \\
    \textsc{MNIST-Seq}               & $T=2$  & \textbf{0.925} & 0.913 \\
    \midrule
    \multicolumn{4}{@{}l}{\textit{Depth $L=10$}} \\
    \midrule
    \textsc{Addition}\textsubscript{Base-2} & $T=6$ & \textbf{0.500} & 0.103 \\
    \textsc{Addition}\textsubscript{Base-3} & $T=6$ & \textbf{0.269} & 0.239 \\
    \textsc{Addition}\textsubscript{Base-5} & $T=6$ & \textbf{0.144} & \textbf{0.144} \\
    \textsc{first-last-XOR}          & $T=6$  & \textbf{0.973} & 0.501 \\
    \textsc{first-last-XOR}          & $T=14$ & \textbf{0.504}& \textbf{0.504} \\
    \textsc{MNIST-Seq}               & $T=2$  & \textbf{0.271} & 0.099 \\
    \bottomrule
  \end{tabular}
\end{table}

\begin{table}[t]
\centering
\small
\caption{Token-wise accuracy on arithmetic addition across training variants in OOD lengths.}
\label{tab:ood_methods}
\begin{tabular}{lllcccc}
\toprule
Base & Group & Method & ID ($n{=}5$) & OOD-$2\times$ & OOD-$5\times$ & OOD-$10\times$ \\
\midrule
\multirow{4}{*}{$b=3$}
  & \multirow{2}{*}{SG-variant}
    & SG     & 0.217 & 0.197 & 0.182 & 0.177 \\
  & & SG-SG  & 0.238 & 0.220 & 0.207 & 0.202 \\
  \cmidrule(lr){2-2}
  & \multirow{2}{*}{CVX-variant}
    & CVX    & \textbf{0.366} & 0.302 & 0.254 & 0.234 \\
  & & SG-CVX & 0.340 & \textbf{0.314} & \textbf{0.307} & \textbf{0.310} \\
\midrule
\multirow{4}{*}{$b=5$}
  & \multirow{2}{*}{SG-variant}
    & SG     & 0.159 & 0.148 & 0.142 & 0.141 \\
  & & SG-SG  & 0.178 & 0.163 & 0.152 & 0.153 \\
  \cmidrule(lr){2-2}
  & \multirow{2}{*}{CVX-variant}
    & CVX    & 0.218 & 0.164 & 0.135 & 0.125 \\
  & & SG-CVX & \textbf{0.230} & \textbf{0.188} & \textbf{0.173} & \textbf{0.172} \\
\bottomrule
\end{tabular}
\end{table}

\section{Analysis and Ablation Study}

\label{sec:analysis}

The experiments in \S\ref{sec:experiments} show that the models trained with CVX-variants reach global optimality on training data. In this section, we further ablate and analyze key factors that influence when such a theoretical property is translated into better generalization performance. 

\subsection{Ablations on Model Depth, Timestep and Size of Training Set}
\label{sec:ablation_depth_timestep}

Tables~\ref{tab:abl_xor} and~\ref{tab:mnist_ablation} examine how model depth $L$ and timestep horizon $T$ each affect performance of models trained with CVX and SG. We use \textsc{First-Last-XOR} as a controlled benchmark, as its fixed binary input distribution ensures that varying $L$ or $T$ in isolation cleanly probes two factors respectively. \textsc{MNIST-Seq} serves as the sensory counterpart, where each image is streamed as sequential pixel patches so that increasing $T$ spreads the same information across more recurrent steps.

\textbf{Depth scaling.}
As shown at the left in Table~\ref{tab:abl_xor} and \ref{tab:mnist_ablation}, baseline models with SG degrade sharply with depth, where accuracy collapses to chance when increasing $L$ to 10 on both tasks. On the contrary, CVX degrades more gracefully and remains significantly better performance on deep models.

\textbf{Timestep scaling.}
Reported at the right in Table~\ref{tab:abl_xor} and \ref{tab:mnist_ablation}, CVX maintains a consistent advantage over SG across all values of $T$, confirming that its optimization benefits are robust to the choice of timestep horizon.

\begin{table}[t]
\centering
\small
\caption{
\textsc{first-last-XOR} ablations. Left: Depth ablation at fixed timestep $T=6$.
Right: Timestep ablation at fixed depth $L=3$. Test accuracy is reported.
}
\label{tab:abl_xor}
\setlength{\tabcolsep}{8pt}
\renewcommand{\arraystretch}{1.12}

\vspace{0.5em}

\begin{minipage}[t]{0.47\textwidth}
\centering
\textbf{(a) Depth ablation with $T=6$}

\vspace{0.5em}

\begin{tabular}{lcc}
\toprule
Depth $L$ & CVX & SG \\
\midrule
$3$  & \textbf{1.000} & 0.746 \\
$5$  & \textbf{1.000} & 0.744 \\
$10$ & \textbf{0.973} & 0.501 \\
$15$ & \textbf{0.841} & 0.504 \\
\bottomrule
\end{tabular}
\end{minipage}
\hfill
\begin{minipage}[t]{0.47\textwidth}
\centering
\textbf{(b) Timestep ablation with $L=3$}

\vspace{0.5em}

\begin{tabular}{lcc}
\toprule
Timestep $T$ & CVX & SG \\
\midrule
$6$  & \textbf{1.000} & 0.746 \\
$8$  & \textbf{1.000} & 0.724  \\
$11$ & \textbf{0.864} & 0.622 \\
$14$ & \textbf{0.689} & 0.640 \\
\bottomrule
\end{tabular}
\end{minipage}

\end{table}

\begin{table}[t]
\centering
\small
\caption{
\textsc{MNIST-Seq} ablations. Left: Depth ablation at fixed timestep $T=2$.
Right: Timestep ablation at fixed depth $L=3$. Test accuracy is reported.
}
\label{tab:mnist_ablation}
\setlength{\tabcolsep}{8pt}
\renewcommand{\arraystretch}{1.12}
\vspace{0.5em}

\begin{minipage}[t]{0.47\textwidth}
\centering
\textbf{(a) Depth ablation, fixed $T=2$}

\vspace{0.5em}

\begin{tabular}{lcc}
\toprule
Depth $L$ & CVX & SG \\
\midrule
$3$  & \textbf{0.925} & 0.913 \\
$5$  & \textbf{0.844} & 0.740  \\
$10$ & \textbf{0.271} & 0.099 \\
\bottomrule
\end{tabular}
\end{minipage}\hfill
\begin{minipage}[t]{0.47\textwidth}
\centering
\textbf{(b) Timestep ablation, fixed $L=3$}

\vspace{0.5em}

\begin{tabular}{lcc}
\toprule
Timestep $T$ & CVX & SG \\
\midrule
$14$ & \textbf{0.803} & 0.7835 \\
$28$ & \textbf{0.831} & 0.763 \\
$56$ & \textbf{0.821} & 0.687 \\
\bottomrule
\end{tabular}
\end{minipage}

\end{table}

% -----------------------------------------------------------------------------

\textbf{Size of training set.} We further study the effect of training set size on model performance in Table~\ref{tab:abl_ntrain}. We find that SNNs trained with CVX exhibit better generalization performance as the number of training samples increases, whereas SNNs trained with SG show no such benefit. This finding suggests that CVX may facilitate scaling SNN training in a manner analogous to how modern LLMs benefit from larger datasets, which is left for future work.

\iffalse
Table~\ref{tab:abl_ntrain} reveals three findings. First, CVX improves monotonically 
as $n_{\mathrm{train}}$ crosses $|D|$, rising from $0.571$ joint-token accuracy at 
$n_{\mathrm{train}}=512$ to ${\approx}0.73$ once $n_{\mathrm{train}}\ge 4096$, before 
saturating. Second, the LIF baseline remains nearly flat throughout, with joint-token 
accuracy confined to the $0.15$--$0.18$ range regardless of dataset size, indicative 
of a persistent optimization barrier. Third, CVX itself saturates below perfect accuracy 
even when the training set covers the input space many times over. Since the convex 
stage is solved to global optimality, this residual error cannot be attributed to 
optimization failure or data scarcity; it instead implicates the sampled witness family 
or the network architecture as the binding constraint.
\fi
 
\begin{table}[H]
\centering
\small
\caption{Test accuracy on base-$2$ arithmetic addition as
numbers of training samples vary on models with $L{=}3$, $T{=}6$. Both
sequential accuracy (correct only if every output token in an addition sequence is correct)
and token-wise accuracy are reported. Darker numbers indicate better performance within each column, showing the trend when scaling against number of training samples.}
\label{tab:abl_ntrain}
\setlength{\tabcolsep}{4pt}
\begin{tabular}{ccccc}
\toprule
\# train & CVX (sequential) & CVX (token-wise) & SG (sequential) & SG (token-wise) \\
\midrule
512  & \textcolor{cvxs1}{0.571} & \textcolor{cvxt1}{0.907} & 0.160 & 0.737 \\
2304 & \textcolor{cvxs2}{0.717} & \textcolor{cvxt2}{0.951} & 0.183 & 0.744 \\
4096 & \textcolor{cvxs3}{0.730} & \textcolor{cvxt3}{0.954} & 0.163 & 0.738 \\
5888 & \textcolor{cvxs4}{0.725} & \textcolor{cvxt4}{0.954} & 0.184 & 0.738 \\
7680 & \textcolor{cvxs5}{0.728} & \textcolor{cvxt5}{0.954} & 0.152 & 0.738 \\
\bottomrule
\end{tabular}
\end{table}

\iffalse
\paragraph{Summary.}
The ablations diagnose three distinct failure modes. Addition OOD shows that, after convex reconstruction is solved, length extrapolation depends on witness quality: \textsc{CVX$\leftarrow$LIF} is flatter than \textsc{CVX} on the harder bases. XOR depth isolates an optimization failure of surrogate-gradient SNN training, with CVX remaining strong while LIF collapses at large $L$. XOR timestep, MNIST timestep, and the base-$2$ data sweep show where global optimality stops helping: when the sampled spiked dictionary does not contain the temporal or representational features needed for the task. This is the empirical diagnostic promised by the spiked dictionary training view.
\fi

\subsection{Surrogate Gradients as High-quality Witness Generators}

An important lesson we learned from empirical results in \S\ref{sec:experiments} is that CVX achieves its full potential when used on top of a high-quality witness. As we observe in Table \ref{tab:ood_methods}, \textsc{SG-CVX} with witnesses transferred from another trained SNN generalizes better than \textsc{CVX} with randomly sampled witnesses. One possible mechanism is via spectral simplicity:  recent work connects LIF-SNN stability to low-frequency Fourier--Walsh structure~\cite{araya_random_2025}; if pretrained witnesses generate simpler or more stable boolean features, convex reconstruction may inherit that low-frequency structure through the fixed dictionary. We leave the proof of such a spectral generalization bound in the future work.

\section{Conclusion}
\label{sec:conclusion}
We propose a globally optimal training algorithm for SNNs that does not suffer from the error accumulation as in surrogate-gradient training. We achieve it by extending the bi-dual framework in weight-decay regularized training objectives from parallel feedforward ReLU networks to parallel recurrent threshold
networks and parallel SNNs, and make it computationally tractable by using fixed or transferred witnesses. Experiments confirm the global optimality empirically and show that the proposed new algorithm outperforms baselines across various tasks and model configurations. Ablations further show the robustness of our algorithm across the choices of model configurations and the data scalability, indicating the potential to be adopted in large-scale training of SNNs in the future. 

\textbf{Limitations.}
Our evaluation covers arithmetic addition, sequential MNIST, and XOR, but not speech or large-scale vision; and the convex reduction targets threshold-RNNs
with the block recurrent structure, not gated (LSTM-style) or attention-augmented SNNs.

\iffalse

\section*{Acknowledgments}

We thank Rainer Engelken for helpful discussions during the idea formulation stage. This work used NCSA Delta GPU at the National Center for 
Supercomputing Applications through allocation CIS251094 from 
the Advanced Cyberinfrastructure Coordination Ecosystem: 
Services \& Support (ACCESS) program~\cite{access}, which is 
supported by U.S. National Science Foundation grants 
\#2138259, \#2138286, \#2138307, \#2137603, and \#2138296.
\fi

\par\penalty0

\bibliographystyle{plain}
\bibliography{Whole_thing}

\appendix

\newpage
\appendix
\part*{Appendix}

\section{Feedforward Threshold Networks}
We first prove the reduction of the weight-decay regularizer defined in \S\ref{sec:notation_prelim} to its last-layer norms for a single network before proving Theorem \ref{thm:reduction_parallel}.

\subsection{Proof of Reduction to outer Norms}
\begin{theorem*}[Reduction to outer norms]\label{thm:reduction_nonparallel}
Let $G(V,E,w)$ be a fully connected threshold network with amplitudes $\{s_v\}$ and weight-decay regularizer
\[
\mu_{p,2}(\Theta)^2
=
\sum_{v\in V\setminus V_{\mathrm{in}}}
\Big(|s_v|^p+\sum_{(u,v)\in E}|w(u,v)|^p\Big)^{2/p} .
\]
Then
\[
\min_\Theta \mathcal L(f^{L,\Theta}(X),Y)+\frac{\beta_{\mathrm{reg}}}{2}\mu_{p,2}(\Theta)^2
=
\min_{\substack{\Theta\\ |s_{V_{\mathrm{out}}}|=1}} \mathcal L(f^{L,\Theta}(X),Y)+\beta_{\mathrm{reg}}\!\!\sum_{(u,V_{\mathrm{out}})\in E}\!\!\|W(u,V_{\mathrm{out}})\|_p .
\]
\end{theorem*}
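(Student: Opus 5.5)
The plan is to show the two objectives have the same value (read as infima, since the inner magnitudes can only be driven to zero in the limit and never reach it) by proving two inequalities. Both rest on the positive scaling invariance $\sigma(cx)=\sigma(x)$ and on the additivity of $\mu_{p,2}(\Theta)^2$ across nodes.

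First, the inner parameters. Fix any $\Theta$ and pick a hidden neuron $v$ in layer $l\le L-1$. Multiplying its incoming column $W_l[:,v]$ by $\alpha>0$ multiplies its pre-activation by $\alpha$, so its output $\sigma(\cdot)$ is unchanged. Next take an inner amplitude $s_v$ with $l\le L-2$. Multiplying it by $\alpha>0$ (sign kept) multiplies $v$'s contribution to every pre-activation of layer $l+1$ by $\alpha$. This contribution is not a common factor, so I will instead rescale all amplitudes of layer $l$ and all incoming weights of layer $l+1$ from the input side by the same $\alpha$, layer by layer. Each layer-$(l+1)$ pre-activation is then scaled by $\alpha$ and absorbed by the threshold. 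Carrying this from layer $1$ up to layer $L-1$, the function $f^{L,\Theta}(X)$ is preserved exactly, and every inner node block $(|s_v|^p+\sum_u|w(u,v)|^p)^{2/p}$ with $v$ in layers $\le L-1$ is multiplied by $\alpha^2$. The only exceptions are the last hidden amplitudes $s^{(L-1)}$, which I leave alone. Letting $\alpha\to 0$, the additive contribution of these inner blocks vanishes. I expect a small bookkeeping wrinkle here: the last hidden amplitude sits in the same node block as the layer-$(L-1)$ incoming weights. I will split that block using the elementary bound $(a^p+b^p)^{2/p}\to |a|^2$ as $b\to 0$, so in the limit only $|s^{(L-1)}_v|^2$ survives from it.

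Second, the output layer. After the first step, the regularizer reduces in the limit to
\[
\tfrac12\textstyle\sum_v\big(|s^{(L-1)}_v|^2+\|W_L[v,:]\|_p^2\big),
\]
while the network depends only on the products $s^{(L-1)}_v W_L[v,:]$. By AM--GM,
\[
\tfrac12(|s|^2+\|w\|_p^2)\ge |s|\,\|w\|_p=\|s\,w\|_p,
\]
with equality when $|s|=\|w\|_p$. This gives $\ge$ in the claimed identity: any $\Theta$ can be re-gauged to $|s^{(L-1)}_v|=1$ with output weights $s_vW_L[v,:]$, the function is unchanged, and the objective does not increase. For $\le$, start from a feasible point of the right-hand problem with $|s|=1$ and output rows $w_v$. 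Set $s'_v=\operatorname{sign}(s_v)\|w_v\|_p^{1/2}$ and $w'_v=w_v/\|w_v\|_p^{1/2}$, so the products and hence the function are unchanged. Rows with $w_v=0$ can simply be deleted. Then combine this with the inner shrinking from the first step; the left-hand objective approaches $\mathcal L+\beta_{\mathrm{reg}}\sum_v\|w_v\|_p$ arbitrarily closely.

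I expect the main obstacle to be the first step rather than the AM--GM step, for two reasons. The amplitude rescaling must be propagated consistently through the layers so that the pre-activations are positively rescaled rather than distorted; this is what the input-to-output induction handles. And the minimum on the left is generally not attained, because zero inner weights give $\sigma(0)=1$ and change the function. I will therefore state and prove the equality at the level of infima, noting that attainment holds on the right-hand side whenever the outer problem is attained.
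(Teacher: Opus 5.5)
Your skeleton is the paper's: use positive scaling invariance to send every inner block to zero, then apply AM--GM to the pair (last amplitude, outgoing weights) under the gauge $|s|=1$. On the inner step you are more careful than the paper. Its proof asserts that $f$ is invariant to each inner amplitude $s_v$, $v\notin V_{\mathrm{out}}$, separately. That is false for exactly the reason you give, and your common layer-wise rescaling is the right repair. Reading both sides as infima (since $\sigma(0)=1$ prevents attainment) and handling the layer-$(L-1)$ block by a limit are also correct, and both improve on the paper's ``$\min$''.

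The step that does not follow is your claim that the regularizer reduces to $\tfrac12\sum_v\big(|s^{(L-1)}_v|^2+\|W_L[v,:]\|_p^2\big)$.

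\begin{itemize}
\item As defined, $\mu_{p,2}$ groups parameters by the \emph{incoming} edges of each node. The output layer therefore contributes $\sum_{o\in V_{\mathrm{out}}}\|W_L[:,o]\|_p^2$, the norms of the columns of $W_L$.
\item Your per-neuron AM--GM instead needs the rows $W_L[v,:]$, since it pairs each row with $s^{(L-1)}_v$.
\item For $p=2$ the two groupings coincide (both equal $\|W_L\|_F^2$), and then your argument is complete in both directions.
\item For $p\neq 2$ the regrouping is unavailable, and the identity itself fails.
\end{itemize}

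Here is a counterexample. Take $p=1$, $L=2$, scalar output and two hidden neurons. The left side charges $\tfrac{\beta_{\mathrm{reg}}}{2}\big(s_1^2+s_2^2+(|w_1|+|w_2|)^2\big)$. Minimizing over rescalings with $c_v=s_vw_v$ fixed gives $\beta_{\mathrm{reg}}\big(|c_1|^{2/3}+|c_2|^{2/3}\big)^{3/2}$. At $c=(1,1)$ this equals $2\sqrt2\,\beta_{\mathrm{reg}}$, against $2\beta_{\mathrm{reg}}$ on the right. Now take $X=I_2$, $Y=(1,-1)^\top$ and squared loss. For small $\beta_{\mathrm{reg}}$ the optimum on the right uses both atoms $(1,0)$ and $(0,1)$, so the two optimal values differ. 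For $p>2$ the mismatch goes the other way.

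So you should either state the result for $p=2$, which is the only case the paper uses downstream, or redefine the regularizer so the output weights are grouped by last-hidden neuron. The paper's own proof has the same hole. It treats the last layer as a single pair $(s_{V_{\mathrm{out}}},W(\cdot,V_{\mathrm{out}}))$, then switches to the row-wise sum $\sum_{(u,V_{\mathrm{out}})}\|W(u,V_{\mathrm{out}})\|_p$ in its final line.
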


\begin{proof}
For a hidden node $v$, let $a_v=\big(\sum_{(u,v)\in E}|w(u,v)|^p\big)^{1/p}$ and $z_v=\sum_{(u,v)\in E}w(u,v)o(u)$, so $o(v)=\sigma_{s_v}(z_v)=s_v\ind{z_v\ge0}$. For $\alpha>0$,
\[
\sigma_{s_v}(\alpha z_v)=s_v\ind{\alpha z_v\ge0}=s_v\ind{z_v\ge0}=\sigma_{s_v}(z_v),
\]
hence $f^{L,\Theta}(X)$ is invariant to $a_v$ for every hidden $v$ and to $s_v$ for every $v\notin V_{\mathrm{out}}$. The output retains only $s_{V_{\mathrm{out}}}W(\cdot,V_{\mathrm{out}})$. By this invariance we may push
\[
a_v\to 0\ \ (v\ \text{hidden}),\qquad |s_v|\to 0\ \ (v\notin V_{\mathrm{out}}),
\]
leaving
\[
\min_\Theta \mathcal L(f^{L,\Theta}(X),Y)+\frac{\beta_{\mathrm{reg}}}{2}\Big(|s_{V_{\mathrm{out}}}|^2+\|W(\cdot,V_{\mathrm{out}})\|_p^2\Big).
\]
With $s_{V_{\mathrm{out}}}\mapsto\alpha s_{V_{\mathrm{out}}}$, $W(\cdot,V_{\mathrm{out}})\mapsto\alpha^{-1}W(\cdot,V_{\mathrm{out}})$ fixing the output,
\[
\frac{\beta_{\mathrm{reg}}}{2}\Big(\alpha^2|s_{V_{\mathrm{out}}}|^2+\alpha^{-2}\|W(\cdot,V_{\mathrm{out}})\|_p^2\Big)
\ge
\beta_{\mathrm{reg}}\,|s_{V_{\mathrm{out}}}|\,\|W(\cdot,V_{\mathrm{out}})\|_p,
\]
with equality at $\alpha^2=\|W(\cdot,V_{\mathrm{out}})\|_p/|s_{V_{\mathrm{out}}}|$. Setting $|s_{V_{\mathrm{out}}}|=1$,
\[
\min_{\substack{\Theta\\ |s_{V_{\mathrm{out}}}|=1}} \mathcal L(f^{L,\Theta}(X),Y)+\beta_{\mathrm{reg}}\!\!\sum_{(u,V_{\mathrm{out}})\in E}\!\!\|W(u,V_{\mathrm{out}})\|_p. \qedhere
\]
\end{proof}

\subsection{Proof of Theorem \ref{thm:reduction_parallel}}\label{app:reduction_parallel}

\begin{theorem*}[Reduction to outer norms for parallel networks]
Let $G(V,E,w)$ be a fully connected $K$-parallel threshold network with subnetworks
$\{G_k(V_k,E_k,w_k)\}_{k=1}^K$ and additive weight-decay regularizer
\[
\mu_{p,2}(\Theta)^2
=
\sum_{k=1}^K\mu_{p,2}(\Theta_k)^2 .
\]
Then the weight-decay regularized training problem is equivalent, under the gauge $|s^{(L-1)}_k|=1$ for all $k$, to
\[
\min_\Theta \mathcal L+\tfrac{\beta_{\mathrm{reg}}}{2}\mu_{p,2}(\Theta)^2
=
\min_{\Theta:\,|s^{(L-1)}_k|=1}\mathcal L+\beta_{\mathrm{reg}}\sum_{k=1}^K\|W_{L,k}\|_p,
\]
with $f^{L,K,\Theta}(X)=f^{L,K,\bar\Theta}(X)$ throughout.
\end{theorem*}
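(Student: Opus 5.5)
The plan is to reduce the parallel statement to the single-network reduction just proved, applied separately to each subnetwork $G_k$, and then reassemble using two facts: the regularizer is additive across subnetworks, and the output is a sum over subnetworks. The key structural observation is that the $K$ subnetworks share only input nodes, which carry no parameters and are excluded from the regularizer sum. So $\Theta=(\Theta_1,\ldots,\Theta_K)$ splits into disjoint blocks with
\[
\mu_{p,2}(\Theta)^2=\sum_{k=1}^K\mu_{p,2}(\Theta_k)^2, \qquad f^{L,K,\Theta}(X)=\sum_{k=1}^K f^{L,k,\Theta_k}(X).
\]
Consequently, any rescaling performed inside one block changes neither the other blocks' outputs nor their regularizer terms.

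First, fix $k$ and apply the positive scaling invariance argument from the non-parallel proof inside $G_k$. For every hidden node $v\in V_k$, rescale its incoming column by $\alpha>0$; this leaves $\sigma_{s_v}$ unchanged. Likewise rescale the inner amplitudes $s^{(l)}_k$, $l\le L-2$, and compensate in the next layer's incoming weights, whose magnitude is itself free by invariance. This yields a parameter $\bar\Theta_k$ with $f^{L,k,\bar\Theta_k}=f^{L,k,\Theta_k}$, which gives item 1 after summing over $k$. It also makes the inner contributions to $\mu_{p,2}(\Theta_k)^2$ arbitrarily small. The inner terms are additive squared blocks, so the infimum over these directions is attained in the limit by dropping them. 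What remains per subnetwork is $\tfrac{\beta_{\mathrm{reg}}}{2}\big(|s^{(L-1)}_k|^2+\|W_{L,k}\|_p^2\big)$, with the function depending only on the product $s^{(L-1)}_kW_{L,k}$.

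Second, within each $k$, balance the last amplitude against the readout. Under $s^{(L-1)}_k\mapsto\alpha_k s^{(L-1)}_k$ and $W_{L,k}\mapsto\alpha_k^{-1}W_{L,k}$, AM--GM gives
\[
\tfrac12\big(\alpha_k^2|s^{(L-1)}_k|^2+\alpha_k^{-2}\|W_{L,k}\|_p^2\big)\ge |s^{(L-1)}_k|\,\|W_{L,k}\|_p,
\]
with equality at $\alpha_k^2=\|W_{L,k}\|_p/|s^{(L-1)}_k|$. The $\alpha_k$ are chosen independently, because each affects only its own block. Absorbing $|s^{(L-1)}_k|$ into $W_{L,k}$ then gives the gauge $|s^{(L-1)}_k|=1$. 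Summing over $k$ produces $\beta_{\mathrm{reg}}\sum_k\|W_{L,k}\|_p$, while the loss term is unchanged, since it depends only on the summed output.

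Third, prove the reverse inequality. Take any feasible point of the right-hand problem and, for each $k$, construct left-hand parameters with inner magnitudes $\epsilon\to0$ and balanced outer scaling. Their objective converges to the right-hand value, so the two problems have equal value, interpreting $\min$ as $\inf$ where the inner-norm limit is not attained.

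The main obstacle is this non-attainment: the inner magnitudes cannot literally equal zero, because a zero incoming column gives $\sigma(0)=1$ constantly and changes the function. I would therefore state the equality as an equality of infima, realized by a sequence of rescalings. To handle the $\ind{z\ge0}$ boundary cleanly, I would note that only strictly positive $\alpha$ are ever used, so sign patterns are preserved exactly along the sequence.
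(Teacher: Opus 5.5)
Your proposal is correct and is essentially the paper's proof: apply the single-network reduction to each disjoint block $\Theta_k$, use additivity of the regularizer and of the output, and balance the scalar amplitude $s^{(L-1)}_k$ against $W_{L,k}$ by AM--GM before fixing the gauge $|s^{(L-1)}_k|=1$. Your explicit reverse inequality and your observation that the left-hand side is only an infimum (a zero inner column would give the constant gate $\sigma(0)=1$) are refinements that the paper leaves implicit when it writes $\min$ on both sides.
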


\begin{proof}
Since $\mu_{p,2}(\Theta)^2=\sum_{k=1}^K\mu_{p,2}(\Theta_k)^2$ and the $G_k$ have disjoint parameters, the previous theorem applies per $k$: pushing $\{\|W_{l,k}\|_p\}_{l\le L-1}\to0$ and $\{|s^{(l)}_k|\}_{l\le L-2}\to0$ leaves
\[
\frac{\beta_{\mathrm{reg}}}{2}\left(|s^{(L-1)}_k|^2+\|W_{L,k}\|_p^2\right)
\ge
\beta_{\mathrm{reg}}\,|s^{(L-1)}_k|\,\|W_{L,k}\|_p,
\]
with $|s^{(L-1)}_k|=1$ giving $\beta_{\mathrm{reg}}\|W_{L,k}\|_p$. With $f^{L,K,\Theta}=\sum_{k=1}^K f^{L,k,\Theta_k}$,
\[
\min_\Theta\ \mathcal L(f^{L,K,\Theta}(X),Y)+\frac{\beta_{\mathrm{reg}}}{2}\mu_{p,2}(\Theta)^2
=
\min_{\substack{\Theta\\ |s^{(L-1)}_k|=1}}\ \mathcal L(f^{L,K,\Theta}(X),Y)+\beta_{\mathrm{reg}}\sum_{k=1}^K\|W_{L,k}\|_p.
\]
\end{proof}

\section{Proof of Theorem~\ref{thm:feedforward_bi_dual}}
\label{app:feedforward_bidual}

We restate the theorem for convenience.

\begin{theorem*}[Bidual of feedforward parallel threshold networks]
Let $p^{L,K}$ denote the weight-decay regularized training problem for a $K$-parallel feedforward threshold network with convex loss $\mathcal L$ and regularization parameter $\beta_{\mathrm{reg}}>0$. Let $\phi_k(X;\theta_k)$ denote the final hidden-layer feature map of subnetwork $k$ after the reduction of Theorem~\ref{thm:reduction_parallel}. Then the dual problem is
\[
d^{L,K}
=
\max_{\lambda}\ -\mathcal L^*(-\lambda)
\quad
\mathrm{s.t.}
\quad
\max_{\theta_k\in\bar\Theta_k}
\|\lambda^\top\phi_k(X;\theta_k)\|_{p^*}
\le \beta_{\mathrm{reg}},
\qquad k\in[K].
\]
The corresponding infinite bidual is
\[
P_{\infty}^{L,K}
=
\min_{\mu}
\left\{
\mathcal L\!\left(
\int_{\theta\in\bar\Theta}
\phi(X;\theta)\,d\mu(\theta),
Y
\right)
+
\beta_{\mathrm{reg}}\|\mu\|_{\mathrm{TV}}
\right\}.
\]
Since $\beta_{\mathrm{reg}}>0$, strong duality holds:
\[
d^{L,K}=P_{\infty}^{L,K}.
\]
Moreover, if the number of parallel subnetworks satisfies $K\ge K^*$ for some $K^*\le n+1$, then
\[
p^{L,K}
=
d^{L,K}
=
P_{\infty}^{L,K}.
\]
For vector-valued outputs with $k$ output coordinates, the Carathéodory bound becomes $K^*\le nk+1$ after flattening the output matrix into $\mathbb{R}^{nk}$.
\end{theorem*}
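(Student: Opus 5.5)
The plan is to start from the reduced primal furnished by Theorem~\ref{thm:reduction_parallel}, under the gauge $|s^{(L-1)}_k|=1$. Writing $\phi_k=\phi_k(X;\theta_k)\in\{0,1\}^{n\times m_{L-1,k}}$, this primal is
\[
p^{L,K}=\min_{\theta,\{W_{L,k}\}}\ \mathcal L\Big(\sum_{k=1}^K\phi_k W_{L,k},Y\Big)+\beta_{\mathrm{reg}}\sum_{k=1}^K\|W_{L,k}\|_p .
\]
I would introduce the auxiliary prediction variable $Z=\sum_k\phi_kW_{L,k}$ with multiplier $\lambda$, and form the Lagrangian. Minimizing over $Z$ produces $-\mathcal L^*(-\lambda)$. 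Minimizing over $W_{L,k}$ for fixed $\theta_k$ produces the term $\min_{W}\{\beta_{\mathrm{reg}}\|W\|_p-\langle\lambda,\phi_kW\rangle\}$. This term equals $0$ when $\|\lambda^\top\phi_k\|_{p^*}\le\beta_{\mathrm{reg}}$ and $-\infty$ otherwise, by the definition of the dual norm. Interchanging $\min_\theta$ and $\max_\lambda$ (weak duality) then gives $p^{L,K}\ge d^{L,K}$, with $d^{L,K}$ the stated semi-infinite dual whose constraint is uniform over $\theta_k\in\bar\Theta_k$. Because all subnetworks range over the same parameter class, the $K$ constraints coincide, so effectively there is a single constraint indexed by $\theta\in\bar\Theta$.

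Next I would derive the bidual. The dual is a concave maximization over $\lambda\in\mathbb{R}^{n\times d_{\mathrm{out}}}$ subject to infinitely many convex constraints $|\langle\lambda,\phi(X;\theta)u\rangle|\le\beta_{\mathrm{reg}}$, indexed by $\theta$ and by $\|u\|_p\le1$. I would attach a nonnegative measure to these constraints, fold the sign/direction $u$ into a signed measure $\mu$ over reduced feature maps, and compute the Lagrangian dual in the standard way. This yields $P_\infty^{L,K}=\min_\mu \mathcal L(\int\phi\,d\mu,Y)+\beta_{\mathrm{reg}}\|\mu\|_{\mathrm{TV}}$.

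The key simplification for strong duality is that $\phi$ takes values in $\{0,1\}^{n\times m}$. Its range is therefore a finite set, and the constraint set is really a finite collection of polyhedral/norm constraints. The semi-infinite program collapses to a finite convex program, and the measure $\mu$ may be taken atomic on the finitely many distinct patterns. Slater's condition holds at $\lambda=0$ because $\beta_{\mathrm{reg}}>0$ makes every constraint strict there, and $\mathcal L^*$ is finite near the relevant point for standard losses. Hence $d^{L,K}=P_\infty^{L,K}$ by finite-dimensional convex strong duality.

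Finally, to close the gap with the finite-width primal, I take an optimal $\mu^*$ (it exists by finiteness of the pattern set and coercivity of the TV term). The point $\int\phi\,d\mu^*/\|\mu^*\|_{\mathrm{TV}}$ lies in the convex hull of the symmetric set $\{\pm\phi(X;\theta)u:\theta,\ \|u\|_p\le1\}\subset\mathbb{R}^{n}$ for scalar outputs, or $\mathbb{R}^{nd_{\mathrm{out}}}$ after vectorizing. Carathéodory's theorem then writes it as a combination of at most $n+1$ (resp.\ $nd_{\mathrm{out}}+1$) atoms with total weight $\|\mu^*\|_{\mathrm{TV}}$. Each atom is assigned to one subnetwork $k$: set $\theta_k$ to its witness parameters and $W_{L,k}$ to the weight times $u$. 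This gives a feasible primal point with value $\le P_\infty^{L,K}$ whenever $K\ge K^*$, so $p^{L,K}\le P_\infty^{L,K}=d^{L,K}\le p^{L,K}$.

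The main obstacle is making this last assignment legitimate. A single atom must be realizable by one subnetwork of width $m_{L-1,k}$, meaning a single readout vector $u$ on an entire realizable feature matrix $\phi_k$. In addition, the norm bookkeeping must give exactly $\sum_k\|W_{L,k}\|_p=\|\mu^*\|_{\mathrm{TV}}$. I would handle this by defining the atoms as the vectors $\phi(X;\theta)u$ themselves rather than individual columns, so that the representation is tautologically realizable by one subnetwork.
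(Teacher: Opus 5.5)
Your proposal is correct and follows essentially the same route as the paper: an auxiliary prediction variable with the $\ell_p$ dual-norm computation to obtain the dual, an atomic-gauge/measure bidual over atoms $\phi(X;\theta)u$, Slater at $\lambda=0$ for $d^{L,K}=P_\infty^{L,K}$, and Carathéodory with one atom per subnetwork to close the gap. Your refinements only make explicit what the paper leaves implicit. These are using finiteness of the binary pattern set to turn the strong-duality step into a finite-dimensional one, and writing out the sandwich $p^{L,K}\le P_\infty^{L,K}=d^{L,K}\le p^{L,K}$.
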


\begin{proof}
After applying Theorem~\ref{thm:reduction_parallel}, the reduced training problem is
\[
p^{L,K}
=
\min_{\{\theta_k,w_k\}_{k=1}^K}
\mathcal L\!\left(
\sum_{k=1}^K \phi_k(X;\theta_k)w_k,
Y
\right)
+
\beta_{\mathrm{reg}}\sum_{k=1}^K\|w_k\|_p,
\]
where $\theta_k\in\bar\Theta_k$ denotes the reduced hidden parameters of subnetwork $k$, and $w_k$ denotes its output-layer weights.

Introduce an auxiliary variable
\[
\hat Y
=
\sum_{k=1}^K \phi_k(X;\theta_k)w_k.
\]
The equivalent constrained problem is
\[
\min_{\hat Y,\{\theta_k,w_k\}}
\mathcal L(\hat Y,Y)
+
\beta_{\mathrm{reg}}\sum_{k=1}^K\|w_k\|_p
\quad
\mathrm{s.t.}
\quad
\hat Y
=
\sum_{k=1}^K \phi_k(X;\theta_k)w_k.
\]
Let $\lambda$ be the Lagrange multiplier for this equality constraint. The Lagrangian is
\[
\mathcal L(\hat Y,Y)
+
\lambda^\top \hat Y
-
\lambda^\top
\sum_{k=1}^K \phi_k(X;\theta_k)w_k
+
\beta_{\mathrm{reg}}\sum_{k=1}^K\|w_k\|_p.
\]
Taking the infimum over $\hat Y$ gives
\[
\inf_{\hat Y}
\left\{
\mathcal L(\hat Y,Y)+\lambda^\top \hat Y
\right\}
=
-\mathcal L^*(-\lambda).
\]
For each $k$, the infimum over $w_k$ is
\[
\inf_{w_k}
\left\{
-\lambda^\top\phi_k(X;\theta_k)w_k
+
\beta_{\mathrm{reg}}\|w_k\|_p
\right\}.
\]
By the conjugacy of the $\ell_p$ norm, this infimum is finite and equal to zero if and only if
\[
\|\lambda^\top\phi_k(X;\theta_k)\|_{p^*}\le \beta_{\mathrm{reg}};
\]
otherwise it is $-\infty$. Since each subnetwork has its own hidden parameters, the dual must satisfy this constraint uniformly over all admissible hidden parameters. Therefore, the dual is
\[
d^{L,K}
=
\max_{\lambda}\ -\mathcal L^*(-\lambda)
\quad
\mathrm{s.t.}
\quad
\max_{\theta_k\in\bar\Theta_k}
\|\lambda^\top\phi_k(X;\theta_k)\|_{p^*}
\le \beta_{\mathrm{reg}},
\qquad k\in[K].
\]
If all subnetworks share the same admissible hidden-parameter class, this is equivalently a single uniform constraint over that class.

We next derive the bidual. Define the reduced atom set
\[
\mathcal A
=
\left\{
\phi(X;\theta)a
:
\theta\in\bar\Theta,\ \|a\|_p\le 1
\right\}
\subseteq \mathbb{R}^n
\]
for scalar outputs. For vector-valued outputs, we regard the output matrix as a vector in $\mathbb{R}^{nk}$ and define $\mathcal A$ analogously after vectorization.

The dual constraint can be written as
\[
\sup_{a\in\mathcal A}\lambda^\top a\le \beta_{\mathrm{reg}}.
\]
The associated atomic gauge is
\[
\gamma_{\mathcal A}(z)
=
\inf_{\mu}
\left\{
\|\mu\|_{\mathrm{TV}}
:
z=\int_{a\in\mathcal A} a\,d\mu(a)
\right\}.
\]
The bidual is therefore
\[
P_{\infty}^{L,K}
=
\min_z
\mathcal L(z,Y)+\beta_{\mathrm{reg}}\gamma_{\mathcal A}(z),
\]
or equivalently,
\[
P_{\infty}^{L,K}
=
\min_{\mu}
\left\{
\mathcal L\!\left(
\int_{a\in\mathcal A} a\,d\mu(a),
Y
\right)
+
\beta_{\mathrm{reg}}\|\mu\|_{\mathrm{TV}}
\right\}.
\]
Reparameterizing the atoms $a=\phi(X;\theta)c$ gives the stated form over reduced hidden parameters.

Strong duality follows from Slater's condition. At $\lambda=0$,
\[
\max_{\theta\in\bar\Theta}
\|0^\top\phi(X;\theta)\|_{p^*}
=
0
<
\beta_{\mathrm{reg}},
\]
since $\beta_{\mathrm{reg}}>0$. Hence the dual feasible set has nonempty relative interior, and
\[
d^{L,K}=P_{\infty}^{L,K}.
\]

Finally, the finite-width equivalence follows from Carathéodory's theorem. For scalar outputs, the atom set lies in $\mathbb{R}^n$, so any point in the convex hull of the atom set can be represented using at most $n+1$ atoms. Each atom corresponds to one parallel subnetwork. Therefore, if
\[
K\ge K^*
\qquad
\text{for some}
\qquad
K^*\le n+1,
\]
the finite $K$-parallel primal can realize the same prediction vector as the infinite bidual. Hence
\[
p^{L,K}=P_{\infty}^{L,K}.
\]
Combining this with strong duality gives
\[
p^{L,K}
=
d^{L,K}
=
P_{\infty}^{L,K}.
\]
For vector-valued outputs with $k$ output dimensions, the same argument applies in $\mathbb{R}^{nk}$, giving the bound $K^*\le nk+1$.
\end{proof}
\section{Recurrent Threshold Network Theory}
\label{app:recurrent_theory}

\subsection{Unrolled DAG representation and recurrent reduction}

A fixed-horizon recurrent threshold network is represented as a directed acyclic graph by unrolling over timesteps $t=1,\ldots,T$: each recurrent unit is copied once per timestep, a hidden unit $v$ becomes a time-indexed node $v^t$, and tied edges satisfy
$$
w(u^t,v^t)=w(u^{t'},v^{t'})
\qquad
\text{for all } t,t'\in[T].
$$
The difference from the feedforward case is that each recurrent hidden neuron has two incoming blocks, $P_{k,l,\mathrm{in}}[:,i]$ and $P_{k,l,\mathrm{rec}}[:,i]$, tied across time; the reduction scales them jointly. For $\alpha>0$, set $P_{k,l,\mathrm{in}}[:,i]\mapsto\alpha P_{k,l,\mathrm{in}}[:,i]$ and $P_{k,l,\mathrm{rec}}[:,i]\mapsto\alpha P_{k,l,\mathrm{rec}}[:,i]$. The pre-activation
$$
z_{k,l,i}^t
=
H_{k,l-1}^tP_{k,l,\mathrm{in}}[:,i]
+
H_{k,l}^{t-1}P_{k,l,\mathrm{rec}}[:,i]
$$
obeys $z_{k,l,i}^t\mapsto\alpha z_{k,l,i}^t$, and
$$
\sigma(\alpha\,z_{k,l,i}^t)=\sigma(z_{k,l,i}^t).
$$
The spike output is unchanged, so every downstream computation is unchanged and $f^{L,T,K,\Theta}$ is invariant to $\alpha$. Writing $a_{k,l,i}=\big(\|P_{k,l,\mathrm{in}}[:,i]\|_p^p+\|P_{k,l,\mathrm{rec}}[:,i]\|_p^p\big)^{1/p}$, we may push
$$
\inf_{\alpha>0}\frac{\beta_{\mathrm{reg}}}{2}\,\alpha^2 a_{k,l,i}^2=0
$$
for every inner neuron ($l\le L-1$), and likewise $|s^{(l)}_k|\to0$ for $l\le L-2$. The pair $(s^{(L-1)}_k,P_{k,\mathrm{out}})$ merges by the arithmetic-geometric mean inequality under $|s^{(L-1)}_k|=1$ as in Theorem~\ref{thm:reduction_nonparallel}, giving
$$
p^{L,T,K}
=
\min_{\substack{\Theta\\ |s^{(L-1)}_k|=1}}
\mathcal L(f^{L,T,K,\Theta}(X^{1:T}),Y)
+
\beta_{\mathrm{reg}}\sum_{k=1}^K\|P_{k,\mathrm{out}}\|_p.
$$

\subsection{Recurrent bidual theorem}
\label{app:recurrent_bidual}

\begin{theorem}[Bidual of recurrent parallel threshold networks]
\label{thm:recurrent_bi_dual_appendix}
Let $p^{L,T,K}$ denote the reduced weight-decay regularized recurrent training problem with convex loss $\mathcal L$ and regularization parameter $\beta_{\mathrm{reg}}>0$. Let
$\phi_{T,k}(X^{1:T};\theta_k)$ denote the final hidden-layer feature map of subnetwork $k$ at timestep $T$ after the recurrent reduction. Then the dual problem is
$$
d^{L,T,K}
=
\max_{\lambda}\ -\mathcal L^*(-\lambda)
\quad
\mathrm{s.t.}
\quad
\max_{\theta_k\in\bar\Theta_k}
\|\lambda^\top\phi_{T,k}(X^{1:T};\theta_k)\|_{p^*}
\le \beta_{\mathrm{reg}},
\qquad k\in[K].
$$
The corresponding infinite bidual is
$$
P_{\infty}^{L,T,K}
=
\min_{\mu}
\left\{
\mathcal L\!\left(
\int_{\theta\in\bar\Theta}
\phi_T(X^{1:T};\theta)\,d\mu(\theta),
Y
\right)
+
\beta_{\mathrm{reg}}\|\mu\|_{\mathrm{TV}}
\right\}.
$$
Since $\beta_{\mathrm{reg}}>0$, strong duality holds:
$$
d^{L,T,K}=P_{\infty}^{L,T,K}.
$$
Moreover, for scalar outputs, if $K\ge K^*$ for some $K^*\le n+1$, then
$$
p^{L,T,K}=d^{L,T,K}=P_{\infty}^{L,T,K}.
$$
For $d_{\mathrm{out}}$ output coordinates, the same argument gives
$K^*\le nd_{\mathrm{out}}+1$ after vectorizing the predictions.
\end{theorem}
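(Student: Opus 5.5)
The plan is to reduce Theorem~\ref{thm:recurrent_bi_dual_appendix} to the feedforward argument of Theorem~\ref{thm:feedforward_bi_dual}. That argument used the structure of the feature map $\phi_k$ only in two places. First, the reduced primal has the form ``convex loss of a sum of feature maps times outer weights, plus $\beta_{\mathrm{reg}}\sum_k\|w_k\|_p$''. Second, each feature map is a matrix in $\mathbb{R}^{n\times m}$ indexed by an admissible parameter set $\bar\Theta_k$. We therefore first invoke the recurrent reduction of the previous subsection. Its joint rescaling of $(P_{k,l,\mathrm{in}}[:,i],P_{k,l,\mathrm{rec}}[:,i])$, together with positive scaling invariance of $\sigma$ on the unrolled DAG, gives
\[
p^{L,T,K}=\min_{\{\theta_k,P_{k,\mathrm{out}}\}}\mathcal L\Big(\sum_{k=1}^K\phi_{T,k}(X^{1:T};\theta_k)P_{k,\mathrm{out}},Y\Big)+\beta_{\mathrm{reg}}\sum_{k=1}^K\|P_{k,\mathrm{out}}\|_p,
\]
with $\phi_{T,k}=H^T_{k,L-1}\in\{0,1\}^{n\times m_{L-1,k}}$. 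This objective is exactly the reduced feedforward problem with $\phi_k$ replaced by $\phi_{T,k}$. The weight tying across time only restricts which matrices are reachable, that is, it determines $\bar\Theta_k$. It plays no role in the duality computation, which treats $\bar\Theta_k$ abstractly.

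Next we repeat the Lagrangian computation. We introduce $\hat Y=\sum_k\phi_{T,k}P_{k,\mathrm{out}}$ with multiplier $\lambda$. Minimizing over $\hat Y$ yields $-\mathcal L^*(-\lambda)$. Minimizing over $P_{k,\mathrm{out}}$ yields, by $\ell_p$/$\ell_{p^*}$ duality, the constraint $\|\lambda^\top\phi_{T,k}\|_{p^*}\le\beta_{\mathrm{reg}}$ uniformly over $\theta_k\in\bar\Theta_k$; this gives $d^{L,T,K}$. For the bidual we form the atom set $\mathcal A_T=\{\phi_T(X^{1:T};\theta)a:\theta\in\bar\Theta,\|a\|_p\le1\}\subseteq\mathbb{R}^n$. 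We write the dual constraint as its support function bounded by $\beta_{\mathrm{reg}}$ and identify the conjugate as the atomic gauge $\gamma_{\mathcal A_T}$. Rewriting the gauge as a TV-minimal measure representation then gives $P_\infty^{L,T,K}$. Strong duality follows from Slater's condition at $\lambda=0$, as before.

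\textbf{Main obstacle.} The delicate step is making the Slater and Carathéodory arguments rigorous for the recurrent atom set. We need $\mathcal A_T$ to be compact, or at least the convex hull to be closed, so that the gauge infimum is attained and Carathéodory applies to an actual point rather than a limit. Here we exploit that $\phi_{T,k}$ takes values in the finite set $\{0,1\}^{n\times m}$. Hence $\mathcal A_T$ is a finite union of images of $\ell_p$ balls, which is compact, and its convex hull is a compact convex set.

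Carathéodory in $\mathbb{R}^n$ then expresses any optimal $z$ with at most $n+1$ atoms. Equivalently, a sparse optimal measure exists, since the TV-minimal representation can be chosen on the extreme points of a face. Each atom is realized by one recurrent subnetwork with its own $\theta_k$ and outer weights. So for $K\ge K^*$, with $K^*\le n+1$, the $K$-parallel primal attains $P_\infty^{L,T,K}$. Combined with weak duality $p\ge d$ and with $d=P_\infty$, this gives $p^{L,T,K}=d^{L,T,K}=P_\infty^{L,T,K}$. Vectorizing the predictions in $\mathbb{R}^{nd_{\mathrm{out}}}$ gives the bound $K^*\le nd_{\mathrm{out}}+1$.
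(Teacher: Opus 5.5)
Your proposal is correct and follows essentially the same route as the paper. Both arguments reduce to the outer-norm objective with $\phi_{T,k}$ in place of $\phi_k$, derive the dual from the Lagrangian via $\ell_p$/$\ell_{p^*}$ conjugacy, identify the bidual through the atomic gauge of the atom set, get $d^{L,T,K}=P_\infty^{L,T,K}$ from Slater's condition at $\lambda=0$, and close with Carath\'eodory using that every recurrent activation pattern lies in $\{0,1\}^n$. Your compactness argument (a finite union of images of $\ell_p$ balls) and your appeal to weak duality $p^{L,T,K}\ge d^{L,T,K}$ only make explicit two steps the paper states briefly.
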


\begin{proof}
Using Appendix~\ref{app:recurrent_theory}, the reduced recurrent training problem is
$$
p^{L,T,K}
=
\min_{\{\theta_k,w_k\}_{k=1}^K}
\mathcal L\!\left(
\sum_{k=1}^K
\phi_{T,k}(X^{1:T};\theta_k)w_k,
Y
\right)
+
\beta_{\mathrm{reg}}\sum_{k=1}^K\|w_k\|_p,
$$
where $\theta_k\in\bar\Theta_k$ denotes the reduced hidden recurrent parameters and $w_k=P_{k,\mathrm{out}}$ denotes the output weights.

Introduce an auxiliary prediction variable
$$
\hat Y
=
\sum_{k=1}^K
\phi_{T,k}(X^{1:T};\theta_k)w_k.
$$
The constrained form is
$$
\min_{\hat Y,\{\theta_k,w_k\}}
\mathcal L(\hat Y,Y)
+
\beta_{\mathrm{reg}}\sum_{k=1}^K\|w_k\|_p
\quad
\mathrm{s.t.}
\quad
\hat Y
=
\sum_{k=1}^K
\phi_{T,k}(X^{1:T};\theta_k)w_k.
$$
Let $\lambda$ be the Lagrange multiplier for this equality constraint. The Lagrangian is
$$
\mathcal L(\hat Y,Y)
+
\lambda^\top \hat Y
-
\lambda^\top
\sum_{k=1}^K
\phi_{T,k}(X^{1:T};\theta_k)w_k
+
\beta_{\mathrm{reg}}\sum_{k=1}^K\|w_k\|_p.
$$
Taking the infimum over $\hat Y$ gives
$$
\inf_{\hat Y}
\left\{
\mathcal L(\hat Y,Y)+\lambda^\top\hat Y
\right\}
=
-\mathcal L^*(-\lambda).
$$
For each subnetwork $k$, the infimum over $w_k$ is
$$
\inf_{w_k}
\left\{
-\lambda^\top\phi_{T,k}(X^{1:T};\theta_k)w_k
+
\beta_{\mathrm{reg}}\|w_k\|_p
\right\}.
$$
By the conjugacy of the $\ell_p$ norm, this infimum is finite and equal to zero if and only if
$$
\|\lambda^\top\phi_{T,k}(X^{1:T};\theta_k)\|_{p^*}
\le \beta_{\mathrm{reg}}.
$$
Otherwise, it is $-\infty$. Since the hidden parameters are optimized, the constraint must hold uniformly over the reduced recurrent parameter set. Therefore, the dual is
$$
d^{L,T,K}
=
\max_{\lambda}\ -\mathcal L^*(-\lambda)
\quad
\mathrm{s.t.}
\quad
\max_{\theta_k\in\bar\Theta_k}
\|\lambda^\top\phi_{T,k}(X^{1:T};\theta_k)\|_{p^*}
\le \beta_{\mathrm{reg}},
\qquad k\in[K].
$$

Define the recurrent atom set
$$
\mathcal A_{\mathrm{RNN}}
=
\left\{
\phi_T(X^{1:T};\theta)a
:
\theta\in\bar\Theta,\ \|a\|_p\le 1
\right\}
\subseteq \mathbb{R}^n
$$
for scalar outputs. For vector-valued outputs, predictions are vectorized into $\mathbb{R}^{nd_{\mathrm{out}}}$. The dual constraint is equivalently
$$
\sup_{a\in\mathcal A_{\mathrm{RNN}}}\lambda^\top a\le \beta_{\mathrm{reg}}.
$$
The associated atomic gauge is
$$
\gamma_{\mathcal A_{\mathrm{RNN}}}(z)
=
\inf_{\mu}
\left\{
\|\mu\|_{\mathrm{TV}}
:
z=\int_{a\in\mathcal A_{\mathrm{RNN}}}a\,d\mu(a)
\right\}.
$$
Thus, the bidual is
$$
P_{\infty}^{L,T,K}
=
\min_z
\mathcal L(z,Y)
+
\beta_{\mathrm{reg}}\gamma_{\mathcal A_{\mathrm{RNN}}}(z),
$$
or equivalently,
$$
P_{\infty}^{L,T,K}
=
\min_{\mu}
\left\{
\mathcal L\!\left(
\int_{a\in\mathcal A_{\mathrm{RNN}}}a\,d\mu(a),
Y
\right)
+
\beta_{\mathrm{reg}}\|\mu\|_{\mathrm{TV}}
\right\}.
$$
Reparameterizing atoms as $a=\phi_T(X^{1:T};\theta)c$ gives the stated measure-valued form over recurrent hidden parameters.

Strong duality follows from Slater's condition. At $\lambda=0$,
$$
\max_{\theta\in\bar\Theta}
\|0^\top\phi_T(X^{1:T};\theta)\|_{p^*}
=
0
<
\beta_{\mathrm{reg}},
$$
because $\beta_{\mathrm{reg}}>0$. Hence the dual feasible set has nonempty relative interior, and
$$
d^{L,T,K}
=
P_{\infty}^{L,T,K}.
$$

Finally, since $L$ and $T$ are fixed and the training set has $n$ samples, every recurrent threshold activation pattern lies in $\{0,1\}^n$. Therefore, the recurrent feature generator is finite, and hence bounded, on the training set. For scalar outputs, the atom set lies in $\mathbb{R}^n$, so Carathéodory's theorem implies that any point in the convex hull of the atom set can be represented using at most $n+1$ atoms. Each atom is realizable by one parallel recurrent subnetwork. Hence, if $K\ge K^*$ for some $K^*\le n+1$, the finite $K$-parallel recurrent network can realize the same prediction vector as the infinite bidual. Therefore,
$$
p^{L,T,K}
=
P_{\infty}^{L,T,K}.
$$
Combining with strong duality gives
$$
p^{L,T,K}
=
d^{L,T,K}
=
P_{\infty}^{L,T,K}.
$$
For $d_{\mathrm{out}}$ output coordinates, the atom set lies in $\mathbb{R}^{nd_{\mathrm{out}}}$ after vectorization, giving the bound $K^*\le nd_{\mathrm{out}}+1$.
\end{proof}

\subsection{Witnessed recurrent arrangement construction}
\label{app:recurrent_arrangements}

Let $D^{l,t}$ denote the arrangement dictionary at layer $l$ and timestep $t$. In the recurrent setting, this dictionary is witnessed: each activation column is stored together with the shared recurrent weights that generated the full trajectory. For layer $l$, the witness is
$$
\omega_l=(P_{l,\mathrm{in}},P_{l,\mathrm{rec}}),
$$
which is chosen once and reused across all timesteps. Thus, $D^{l,t}$ is not an unconstrained marginal set of activation patterns at $(l,t)$; it is a trajectory-consistent dictionary generated by shared recurrent weights.

The recurrent pre-activation can be written as a feedforward-style hyperplane over a stacked feature matrix:
$$
H_l^tP_{l+1,\mathrm{in}}
+
H_{l+1}^{t-1}P_{l+1,\mathrm{rec}}
=
\begin{bmatrix}
H_l^t & H_{l+1}^{t-1}
\end{bmatrix}
\begin{bmatrix}
P_{l+1,\mathrm{in}}\\
P_{l+1,\mathrm{rec}}
\end{bmatrix}.
$$
For the first recurrent layer, the stacked feature matrix is
$$
\begin{bmatrix}
X^t & H_1^{t-1}
\end{bmatrix}.
$$
Therefore, the feedforward arrangement operator $A(\cdot)$ extends to recurrent networks by applying it to these stacked feature matrices, while restricting recurrent columns to those generated by the same witness.

At $t=1$, the witness constraint is vacuous because no previous recurrent trajectory has been generated. The base construction is
$$
D^{1,1}
=
A\!\left(
\begin{bmatrix}
X^1 & H_1^0
\end{bmatrix}
\right).
$$
For $l\ge 2$ at the first timestep,
$$
D^{l,1}
=
\bigsqcup_{|\mathcal S_F|=m_{l-1}}
A\!\left(
\begin{bmatrix}
D^{l-1,1}_{\mathcal S_F} & H_l^0
\end{bmatrix}
\right).
$$
Here $\mathcal S_F$ ranges over column subsets of size $m_{l-1}$, and $\bigsqcup$ denotes union with duplicate activation columns removed.

For $t>1$, the recurrent input cannot be sampled freely from the marginal set $D^{l,t-1}$. It must come from the same shared-weight trajectory that generated the current forward column. Thus, for the first recurrent layer,
$$
D^{1,t}
=
\bigsqcup_{\omega}
A\!\left(
\begin{bmatrix}
X^t & D_{\omega}^{1,t-1}
\end{bmatrix}
\right),
$$
where $D_{\omega}^{1,t-1}$ denotes the recurrent activation columns at timestep $t-1$ generated by the same witness $\omega=(P_{1,\mathrm{in}},P_{1,\mathrm{rec}})$. For deeper layers,
$$
D^{l,t}
=
\bigsqcup_{\omega}
A\!\left(
\begin{bmatrix}
D_{\omega}^{l-1,t} & D_{\omega}^{l,t-1}
\end{bmatrix}
\right).
$$
The notation means that both the forward and recurrent columns are selected from dictionaries generated by the same shared-weight witness.

This witnessed construction enforces recurrent parameter sharing at the dictionary level. If recurrent columns were sampled freely from the marginal set $D^{l,t-1}$, the construction could combine activation patterns generated by incompatible recurrent parameters. Such combinations may not be realizable by any single recurrent network.

At every layer--timestep pair $(l,t)$, each activation column is a binary vector over the $n$ training samples. Hence
$$
D^{l,t}\subseteq \{0,1\}^n,
\qquad
|D^{l,t}|\le 2^n.
$$
Since $L$ and $T$ are fixed, the full witnessed collection
$$
\{D^{l,t}\}_{l\in[L-1],\,t\in[T]}
$$
is finite. The witness constraint can only reduce the admissible set of columns; it cannot increase the cardinality beyond $2^n$ at any fixed $(l,t)$.

\subsection{Proof of Theorem~\ref{thm:finite_convex_L_T}}
\label{app:finite_convex_LT}

\begin{proof}
By Theorem~\ref{thm:recurrent_bi_dual_appendix}, the recurrent dual is
$$
d^{L,T}
=
\max_{\lambda}\ -\mathcal L^*(-\lambda)
\quad
\mathrm{s.t.}
\quad
\max_{\Theta\in\bar\Theta_{\mathrm{RNN}}}
\left\|
\lambda^\top
\sigma\!\left(
H_{L-2}^T P_{L-1,\mathrm{in}}
+
H_{L-1}^{T-1}P_{L-1,\mathrm{rec}}
\right)
\right\|_2
\le \beta_{\mathrm{reg}}.
$$
The witnessed arrangement construction in Appendix~\ref{app:recurrent_arrangements} ensures that every column of the final hidden-layer activation at $(L-1,T)$ is a column of
$$
D^{L-1,T}
=
[d_1,\ldots,d_{P^{L-1,T}}].
$$
Thus, for a width-$m_{L-1}$ final hidden layer,
$$
\left\|
\lambda^\top
\sigma\!\left(
H_{L-2}^T P_{L-1,\mathrm{in}}
+
H_{L-1}^{T-1}P_{L-1,\mathrm{rec}}
\right)
\right\|_2
=
\left(
\sum_{j=1}^{m_{L-1}}
(\lambda^\top d_j)^2
\right)^{1/2},
$$
where each $d_j$ is a column of $D^{L-1,T}$. Since the final-layer hidden neurons are independently parameterized, maximizing over the hidden parameters gives
$$
\max_{d_i\in D^{L-1,T}}
\sqrt{m_{L-1}}|\lambda^\top d_i|
\le \beta_{\mathrm{reg}}.
$$

This constraint is equivalent to the two-sided inequalities
$$
\sqrt{m_{L-1}}\lambda^\top d_i-\beta_{\mathrm{reg}}\le 0,
\qquad
-\sqrt{m_{L-1}}\lambda^\top d_i-\beta_{\mathrm{reg}}\le 0,
\qquad
i\in[P^{L-1,T}].
$$
At $\lambda=0$, all inequalities are strict because $\beta_{\mathrm{reg}}>0$, so Slater's condition holds.

Introduce nonnegative Lagrange multipliers $\gamma_i^+$ and $\gamma_i^-$ for the two inequalities. The Lagrangian of the dual problem is
$$
-\mathcal L^*(-\lambda)
-
\sum_i\gamma_i^+
(\sqrt{m_{L-1}}\lambda^\top d_i-\beta_{\mathrm{reg}})
-
\sum_i\gamma_i^-
(-\sqrt{m_{L-1}}\lambda^\top d_i-\beta_{\mathrm{reg}}).
$$
Collecting terms gives
$$
-\mathcal L^*(-\lambda)
-
\lambda^\top
\sqrt{m_{L-1}}
\sum_i(\gamma_i^+-\gamma_i^-)d_i
+
\beta_{\mathrm{reg}}\sum_i(\gamma_i^++\gamma_i^-).
$$
Using the Fenchel identity with
$$
v
=
-\sqrt{m_{L-1}}
\sum_i(\gamma_i^+-\gamma_i^-)d_i,
$$
we obtain
$$
\min_{\gamma^+,\gamma^-\ge 0}
\mathcal L\!\left(
-\sqrt{m_{L-1}}
\sum_i(\gamma_i^+-\gamma_i^-)d_i,
Y
\right)
+
\beta_{\mathrm{reg}}\sum_i(\gamma_i^++\gamma_i^-).
$$
Let $w_i=\gamma_i^+-\gamma_i^-$. Minimizing over nonnegative $\gamma_i^+,\gamma_i^-$ with fixed difference $w_i$ gives
$$
\gamma_i^++\gamma_i^-=|w_i|.
$$
Therefore,
$$
d^{L,T}
=
\min_{w\in\mathbb{R}^{P^{L-1,T}}}
\mathcal L(-\sqrt{m_{L-1}}D^{L-1,T}w,Y)
+
\beta_{\mathrm{reg}}\|w\|_1.
$$
Finally, set $\widetilde w=-\sqrt{m_{L-1}}w$. Then
$$
-\sqrt{m_{L-1}}D^{L-1,T}w
=
D^{L-1,T}\widetilde w,
\qquad
\|w\|_1
=
\frac{1}{\sqrt{m_{L-1}}}\|\widetilde w\|_1.
$$
Thus,
$$
\widetilde P^{L,T}
=
\min_{\widetilde w\in\mathbb{R}^{P^{L-1,T}}}
\mathcal L(D^{L-1,T}\widetilde w,Y)
+
\frac{\beta_{\mathrm{reg}}}{\sqrt{m_{L-1}}}\|\widetilde w\|_1.
$$
The equivalence to the original finite-width recurrent training problem follows from Theorem~\ref{thm:recurrent_bi_dual_appendix} whenever $K\ge K^*$.
\end{proof}

\subsection{Recurrent arrangement enumeration cost}
\label{app:recurrent_complexity}

The exact recurrent convex formulation is dominated by the construction of the final witnessed arrangement matrix $D^{L-1,T}$. We first bound the cost of a single call to the arrangement operator.

For an input matrix $Z\in\mathbb{R}^{n\times p}$ with rank
$r_Z=\operatorname{rank}(Z)\le \min(n,p)$, the operator $A(Z)$ enumerates sign patterns
$$
\{\ind{Zu\ge 0}:u\in\mathbb{R}^p\}.
$$
The rows of $Z$ define $n$ hyperplanes in parameter space. Standard hyperplane-arrangement and linear-threshold counting bounds imply that the number of realizable sign patterns is at most $O(n^{r_Z})$. If each pattern is evaluated on all $n$ samples in $O(np)$ time, then
$$
\operatorname{cost}(A(Z))
=
O(n^{r_Z+1}p).
$$

At recurrent cell $(l,t)$, the construction combines forward features from $(l-1,t)$ and recurrent features from $(l,t-1)$. Let $Q^{l,t}$ denote the number of spiked dictionary entries available at $(l,t)$ before deduplicating identical binary columns, and define
$$
r^{l,t}
=
\min(n,m_{l-1}+m_l).
$$
The number of witnessed subset pairs is bounded by
$$
\binom{Q^{l-1,t}}{m_{l-1}}
\binom{Q^{l,t-1}}{m_l}.
$$
For each subset pair, $A(\cdot)$ is applied to an
$n\times(m_{l-1}+m_l)$ stacked feature matrix. Therefore,
$$
\operatorname{cost}(D^{l,t})
=
O\!\left(
\binom{Q^{l-1,t}}{m_{l-1}}
\binom{Q^{l,t-1}}{m_l}
n^{r^{l,t}+1}(m_{l-1}+m_l)
\right).
$$
After deduplication, the binary arrangement matrix satisfies $|D^{l,t}|\le 2^n$.

Let
$$
m^*=\max_{l\in[L-1]}m_l,
\qquad
r^*=\min(n,2m^*),
\qquad
Q^*=\max_{l,t}Q^{l,t}.
$$
Bounding all $LT$ recurrent cells by the worst-case cell gives
$$
\operatorname{cost}(D^{L-1,T})
=
O\!\left(
LT
\binom{Q^*}{m^*}^{2}
n^{r^*+1}m^*
\right).
$$
For a $K$-parallel network, the construction is performed independently for each subnetwork, so the cost scales linearly in $K$:
$$
O\!\left(
LTK
\binom{Q^*}{m^*}^{2}
n^{r^*+1}m^*
\right).
$$
A simpler but looser bound is obtained by ignoring witness multiplicity and using $|D^{l,t}|\le 2^n$:
$$
O\!\left(
LTK
\binom{2^n}{m^*}^{2}
n^{r^*+1}m^*
\right).
$$
This bound is intentionally loose, but it captures the key point: exact witnessed arrangement enumeration is exponentially expensive in the number of training samples.

\section{LIF-SNN Theory}
\label{app:snn_theory}

\subsection{Typed LIF DAG and trainability settings}
\label{app:snn_dag}

The fixed-horizon LIF-SNN can be represented as an unrolled DAG by introducing, for every layer $l$ and timestep $t$, a membrane node $U_l^t$ and a spike node $S_l^t$. The membrane node receives the three inputs appearing in the LIF recurrence:
$$
U_l^t
=
S_{l-1}^tP_{l,\mathrm{in}}
+
U_l^{t-1}B_l
-
S_l^{t-1}R_l,
\qquad
S_l^t=\sigma(U_l^t),
$$
where $B_l=\operatorname{Diag}(\beta_l)$, $\beta_l\in[0,1]^{m_l}$, and
$R_l=\operatorname{Diag}(U_{\mathrm{thr}}^l)$. Thus, the corresponding edges are
$$
S_{l-1}^t\to U_l^t
\quad\text{with weight }P_{l,\mathrm{in}},
\qquad
U_l^{t-1}\to U_l^t
\quad\text{with weight }B_l,
\qquad
S_l^{t-1}\to U_l^t
\quad\text{with weight }-R_l.
$$
The spike node is obtained by applying the threshold activation to the membrane node. Equivalently, the edge $U_l^t\to S_l^t$ has fixed weight $1$, and the nonlinearity at $S_l^t$ is $\sigma(\cdot)$.

We use two edge labels. The first label, $\tau:E\to\{0,1\}$, indicates whether an edge weight is optimized. Thus, $\tau(e)=1$ denotes a trainable edge, while $\tau(e)=0$ denotes a fixed structural edge. The second label, $\delta:E\to\{0,1\}$, indicates whether the edge is scale-bearing under the LIF membrane-rescaling symmetry. The input-to-membrane edges $P_{l,\mathrm{in}}$ and reset-threshold edges $-R_l$ are scale-bearing, while the leak edges $B_l$ are structural. The leak coefficients may be trainable or fixed depending on the configuration, but they do not carry the removable positive scale used in the weight-decay reduction.

In the structured-witness setting, at least one of $\beta_l$ or $U_{\mathrm{thr}}^l$ may be optimized while generating the LIF witness. If $U_{\mathrm{thr}}^l$ is trainable, it is a scale-bearing parameter scaled jointly with the input block. If $\beta_l$ is trainable, it remains a bounded structural recurrent coefficient and is not a scale-bearing trainable parameter. In the fixed-structural setting, both $\beta_l$ and $U_{\mathrm{thr}}^l$ are fixed hyperparameters. They restrict the admissible LIF feature generator but are not optimized in the convex reconstruction stage.

\subsection{Proof of Lemma~\ref{lem:lif_membrane_rescaling}}
\label{app:lif_membrane_rescaling}

\begin{proof}
Let
$$
A_l=\operatorname{Diag}(a_{l,1},\ldots,a_{l,m_l}),
\qquad
a_{l,i}>0.
$$
Define
$$
\bar U_l^t=U_l^tA_l,
\qquad
\bar P_{l,\mathrm{in}}=P_{l,\mathrm{in}}A_l,
\qquad
\bar R_l=R_lA_l,
\qquad
\bar B_l=A_l^{-1}B_lA_l.
$$
Starting from the LIF recurrence,
$$
U_l^t
=
S_{l-1}^tP_{l,\mathrm{in}}
+
U_l^{t-1}B_l
-
S_l^{t-1}R_l,
$$
right-multiply both sides by $A_l$:
$$
U_l^tA_l
=
S_{l-1}^tP_{l,\mathrm{in}}A_l
+
U_l^{t-1}B_lA_l
-
S_l^{t-1}R_lA_l.
$$
Using the definitions above,
$$
U_l^tA_l=\bar U_l^t,
\qquad
P_{l,\mathrm{in}}A_l=\bar P_{l,\mathrm{in}},
\qquad
R_lA_l=\bar R_l.
$$
For the recurrent membrane term,
$$
\bar U_l^{t-1}\bar B_l
=
U_l^{t-1}A_l(A_l^{-1}B_lA_l)
=
U_l^{t-1}B_lA_l.
$$
Therefore,
$$
\bar U_l^t
=
S_{l-1}^t\bar P_{l,\mathrm{in}}
+
\bar U_l^{t-1}\bar B_l
-
S_l^{t-1}\bar R_l.
$$
If $B_l$ is diagonal, then $A_l$ and $B_l$ commute, so
$$
\bar B_l=A_l^{-1}B_lA_l=B_l.
$$
Finally, since $A_l$ has strictly positive diagonal entries, each membrane coordinate is positively rescaled. The threshold activation is invariant under positive coordinate-wise scaling, so
$$
\sigma(\bar U_l^t)
=
\sigma(U_l^tA_l)
=
\sigma(U_l^t).
$$
Thus, positive diagonal rescaling of the membrane coordinates leaves all spike outputs unchanged.
\end{proof}

\subsection{Reduction of the LIF weight-decay regularizer}
\label{app:lif_path_reduction}

The trainable parameters of a hidden LIF neuron $i$ in layer $l$ are $P_{l,\mathrm{in}}[:,i]$ and, when trainable, $U_{\mathrm{thr},i}^l$; both act only through $U_l^t$, which feeds $\sigma$. The leak $B_l$ is structural (Lemma~\ref{lem:lif_membrane_rescaling}). For $A_l\succ0$,
\[
\sigma(U_l^tA_l)=\sigma(U_l^t),
\]
so these magnitudes and the inner amplitudes are invisible to $f$, and we may push
\[
\|P_{l,\mathrm{in}}[:,i]\|_p\to0\ \ (l\le L-1),
\qquad
|s^{(l)}_k|\to0\ \ (l\le L-2),
\]
which sends their $\mu_{p,2}$ terms to $0$ while leaving $B_l$ untouched. The pair $(s^{(L-1)}_k,P_{k,\mathrm{out}})$ merges by the arithmetic-geometric mean inequality under $|s^{(L-1)}_k|=1$ as in Theorem~\ref{thm:reduction_nonparallel}, giving
\[
\min_{\Theta}\mathcal L+\frac{\beta_{\mathrm{reg}}}{2}\mu_{p,2}(\Theta)^2
=
\min_{\substack{\Theta\\ |s^{(L-1)}_k|=1}}\mathcal L+\beta_{\mathrm{reg}}\sum_{k=1}^K\|P_{k,\mathrm{out}}\|_p.
\]
\subsection{Witnessed SNN arrangement construction and finiteness}
\label{app:snn_finiteness}

The LIF-SNN reduction introduces a real-valued membrane state in addition to the binary spike state. Therefore, the SNN arrangement dictionary must track both spike patterns and the membrane values that generated them.

For a fixed LIF witness
$$
\omega_l=(B_l,R_l,P_{l,\mathrm{in}},U_l^0),
$$
the LIF recurrence unrolls as
$$
U_l^t
=
U_l^0B_l^t
+
\sum_{\tau=1}^{t}
\left(
S_{l-1}^{\tau}P_{l,\mathrm{in}}
-
S_l^{\tau-1}R_l
\right)
B_l^{t-\tau}.
$$
Thus, under a fixed witness, $U_l^t$ is a deterministic function of the previous-layer spike trajectory
$$
(S_{l-1}^1,\ldots,S_{l-1}^t)
$$
and the same-layer recurrent spike trajectory
$$
(S_l^0,\ldots,S_l^{t-1}).
$$

\begin{lemma}[Finiteness of realizable membrane states for a fixed witness]
\label{lem:snn_finiteness}
Fix a finite training set of $n$ samples, horizon $T$, and LIF parameter witness
$\omega_l=(B_l,R_l,P_{l,\mathrm{in}},U_l^0)$. Then the set of membrane states realizable at layer--timestep pair $(l,t)$ from a finite collection of spike trajectories is finite. More precisely,
$$
\left|
\{U_l^t \text{ realizable under } \omega_l\}
\right|
\le
\prod_{\tau=1}^{t}|D_{\mathrm{SNN}}^{l-1,\tau}|
\cdot
\prod_{\tau=0}^{t-1}|D_{\mathrm{SNN}}^{l,\tau}|.
$$
\end{lemma}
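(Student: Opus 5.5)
The plan is to exploit the unrolled closed form of the LIF recurrence stated just before the lemma. Under a fixed witness $\omega_l=(B_l,R_l,P_{l,\mathrm{in}},U_l^0)$,
$$
U_l^t=U_l^0B_l^t+\sum_{\tau=1}^{t}\big(S_{l-1}^{\tau}P_{l,\mathrm{in}}-S_l^{\tau-1}R_l\big)B_l^{t-\tau}.
$$
This exhibits $U_l^t$ as a deterministic map
$$
F_{\omega_l,t}:\ (S_{l-1}^1,\ldots,S_{l-1}^t,\,S_l^0,\ldots,S_l^{t-1})\longmapsto U_l^t .
$$
All of $B_l$, $R_l$, $P_{l,\mathrm{in}}$ and $U_l^0$ are frozen by the witness, so no continuous parameter remains free. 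The set of realizable membrane states is therefore exactly the image of $F_{\omega_l,t}$ restricted to admissible spike trajectories. The cardinality of an image never exceeds that of its domain, so the whole proof reduces to bounding the number of admissible input trajectories.

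I would carry this out in three steps.

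\textbf{Step 1 (closed form).} Verify the unrolled formula by induction on $t$, starting from the recurrence $U_l^t=S_{l-1}^tP_{l,\mathrm{in}}+U_l^{t-1}B_l-S_l^{t-1}R_l$. This is routine. Since $B_l$ is diagonal, the powers $B_l^{t-\tau}$ are well defined and commute with the rescalings of Lemma~\ref{lem:lif_membrane_rescaling}, though that commutation is not needed here.

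\textbf{Step 2 (admissible domain).} Each coordinate of the domain is a spike column. For $\tau\in[t]$, the feedforward input $S_{l-1}^\tau$ has its columns drawn from $D_{\mathrm{SNN}}^{l-1,\tau}$. For $\tau\in\{0,\ldots,t-1\}$, the recurrent input $S_l^\tau$ has its columns drawn from $D_{\mathrm{SNN}}^{l,\tau}$. The convention $D_{\mathrm{SNN}}^{l,0}$ is the singleton $\{S_l^0\}$ fixed by the initial condition. Working neuron-wise, as the dictionary is defined column by column, the domain is contained in the Cartesian product
$$
\prod_{\tau=1}^{t}D_{\mathrm{SNN}}^{l-1,\tau}\times\prod_{\tau=0}^{t-1}D_{\mathrm{SNN}}^{l,\tau}.
$$
The witness constraint of Appendix~\ref{app:recurrent_arrangements} only restricts which tuples are jointly realizable, so it can only shrink this set.

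\textbf{Step 3 (finiteness).} Each dictionary lies in $\{0,1\}^n$ and so has at most $2^n$ elements. The product is therefore finite, and taking the image under $F_{\omega_l,t}$ gives the stated bound. As a corollary, the spike dictionary $D_{\mathrm{SNN}}^{l,t}=\sigma(\cdot)$ applied to this finite set is finite, which closes an induction over $(l,t)$ in lexicographic order.

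The main obstacle is Step 2: making the bookkeeping consistent. The dictionaries are sets of columns, namely single-neuron patterns over $n$ samples, whereas $U_l^t$ is an $n\times m_l$ matrix that couples neurons. Under dense $P_{l,\mathrm{in}}$, one membrane coordinate depends on all $m_{l-1}$ input columns at once, so a literal product over single columns undercounts. I would first try to interpret $D_{\mathrm{SNN}}^{l-1,\tau}$ as the set of full layer-wise spike matrices realizable under the witness; the bound then holds verbatim. If the paper insists on column dictionaries, I would fall back to stating the bound with $|D^{l-1,\tau}|^{m_{l-1}}$ and $|D^{l,\tau}|^{m_l}$, still finite, and remark that finiteness, rather than the exact exponent, is what the subsequent construction uses.
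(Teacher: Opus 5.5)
Your argument is the paper's own: under the fixed witness, the unrolled recurrence is a deterministic map from the tuple $(S_{l-1}^1,\ldots,S_{l-1}^t,S_l^0,\ldots,S_l^{t-1})$ to $U_l^t$, so the realizable set is at most as large as the finite product of spike dictionaries. The column-versus-matrix issue you flag is real, and the paper's one-line proof passes over it: if the $m_{l-1}$ input columns range independently over $D_{\mathrm{SNN}}^{l-1,\tau}\subseteq\{0,1\}^n$, the literal bound can fail (e.g.\ $n=1$, $m_{l-1}=2$, $P_{l,\mathrm{in}}=(1,2)^\top$ gives four values of $U_l^1$ against a bound of $2$), so state Step~2 directly in your repaired form, reading the dictionaries as layer-wise spike matrices or using exponents $m_{l-1}$ and $m_l$, rather than first asserting the single-column product.
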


\begin{proof}
The tuple
$$
(S_{l-1}^1,\ldots,S_{l-1}^t,S_l^0,\ldots,S_l^{t-1})
$$
ranges over a finite Cartesian product of spike-arrangement sets. For a fixed witness $\omega_l$, the unrolled recurrence maps each such tuple deterministically to a membrane state $U_l^t$. Therefore, the number of realizable membrane states is bounded by the product above.
\end{proof}

Across different LIF witnesses, the real-valued membrane values may vary. However, the projected spike dictionary remains finite after deduplication, because every spike column lies in $\{0,1\}^n$. Hence
$$
D_{\mathrm{SNN}}^{l,t}\subseteq\{0,1\}^n,
\qquad
|D_{\mathrm{SNN}}^{l,t}|\le 2^n.
$$

We now define the witnessed SNN arrangement operator. At layer--timestep pair $(l,t)$, the SNN pre-activation can be written in stacked form:
$$
U_l^t
=
\begin{bmatrix}
S_{l-1}^t & U_l^{t-1} & S_l^{t-1}
\end{bmatrix}
\begin{bmatrix}
P_{l,\mathrm{in}}\\
B_l\\
-R_l
\end{bmatrix}.
$$
The structured arrangement operator $A_{\mathrm{SNN}}(\cdot)$ applies the usual hyperplane-arrangement construction to the stacked state
$$
\begin{bmatrix}
S_{l-1}^t & U_l^{t-1} & S_l^{t-1}
\end{bmatrix},
$$
but only over LIF-structured parameter blocks
$$
\begin{bmatrix}
P_{l,\mathrm{in}}\\
B_l\\
-R_l
\end{bmatrix},
$$
where $B_l$ and $R_l$ are diagonal and $B_l$ has entries in $[0,1]$.

At each cell $(l,t)$, the forward input is drawn from the spike dictionary $D_{\mathrm{SNN}}^{l-1,t}$. The recurrent input is drawn from the witnessed joint dictionary at $(l,t-1)$, whose entries are membrane--spike pairs
$$
(U_l^{t-1},S_l^{t-1})
$$
generated by the same trajectory witness. Therefore, the recurrent subset is not an arbitrary Cartesian product of membrane and spike columns; choosing a recurrent spike column automatically chooses the membrane column associated with it under the same witness.

With this convention, the recursive construction is
$$
D_{\mathrm{SNN}}^{l,t}
:=
\bigsqcup_{\substack{
|\mathcal S_F|=m_{l-1},\ |\mathcal S_R|=m_l\\
\text{witness-consistent}
}}
A_{\mathrm{SNN}}\!\left(
\begin{bmatrix}
D_{\mathrm{SNN},\mathcal S_F}^{l-1,t}
&
U_{l,\mathcal S_R}^{t-1}
&
D_{\mathrm{SNN},\mathcal S_R}^{l,t-1}
\end{bmatrix}
\right),
$$
where $U_{l,\mathcal S_R}^{t-1}$ denotes the membrane columns associated with the selected recurrent spike columns under the same trajectory witness. The union $\bigsqcup$ removes duplicate spike columns after projection.

The construction is well-founded by diagonal induction on $l+t$, since each $D_{\mathrm{SNN}}^{l,t}$ depends only on the previous-layer dictionary $D_{\mathrm{SNN}}^{l-1,t}$ and the same-layer witnessed recurrent dictionary at $D_{\mathrm{SNN}}^{l,t-1}$. Because every projected spike column lies in $\{0,1\}^n$, the complete projected SNN dictionary is finite for fixed $L$ and $T$.

\subsection{Proof of Per-Timestep Loss}
\label{app:snn_per_timestep_dual}

We prove the finite convex formulation for per-timestep supervision with a shared output layer. The reduced training problem is
$$
P_{\mathrm{all}\text{-}t}^{L,T,K}
=
\min_{\Theta\in\bar\Theta_{\mathrm{SNN}}}
\sum_{t=1}^T
\mathcal L\!\left(
\sum_{k=1}^K
\sigma(U_{L-1,k}^t)P_{k,\mathrm{out}},
Y^t
\right)
+
\beta_{\mathrm{reg}}\sum_{k=1}^K\|P_{k,\mathrm{out}}\|_2 .
$$
The proof follows the finite-convex derivation of Theorem~\ref{thm:finite_convex_L_T}, with two changes. First, the dual variable has one block $\lambda_t\in\mathbb{R}^n$ per timestep. Second, the shared output layer couples all timestep losses through a single coefficient vector.

Introduce auxiliary variables
$$
\widehat Y^t
=
\sum_{k=1}^K
\sigma(U_{L-1,k}^t)P_{k,\mathrm{out}},
\qquad
t\in[T],
$$
with multipliers $\lambda_t\in\mathbb{R}^n$. For each subnetwork $k$, the terms involving $P_{k,\mathrm{out}}$ are
$$
-
\left(
\sum_{t=1}^T
\lambda_t^\top\sigma(U_{L-1,k}^t)
\right)P_{k,\mathrm{out}}
+
\beta_{\mathrm{reg}}\|P_{k,\mathrm{out}}\|_2.
$$
Taking the infimum over $P_{k,\mathrm{out}}$ gives zero if
$$
\left\|
\sum_{t=1}^T
\lambda_t^\top\sigma(U_{L-1,k}^t)
\right\|_2
\le \beta_{\mathrm{reg}},
$$
and gives $-\infty$ otherwise. Hence the dual constraint is
$$
\max_{\Theta\in\bar\Theta_{\mathrm{SNN}}}
\left\|
\sum_{t=1}^T
\lambda_t^\top\sigma(U_{L-1,k}^t)
\right\|_2
\le \beta_{\mathrm{reg}}.
$$

Under shared recurrent parameters, the columns selected across timesteps must form a trajectory-consistent tuple. Let $\mathcal T_{L-1}$ denote the set of trajectory-consistent final-layer tuples, and write
$$
D^{\mathrm{traj}}
=
\begin{bmatrix}
D_{\mathcal T}^{L-1,1}\\
\vdots\\
D_{\mathcal T}^{L-1,T}
\end{bmatrix}
\in
\{0,1\}^{nT\times|\mathcal T_{L-1}|}.
$$
Then the dual constraint reduces to
$$
\max_{i\in[|\mathcal T_{L-1}|]}
\sqrt{m_{L-1}}
\left|
\sum_{t=1}^T
\lambda_t^\top D_{\mathcal T}^{L-1,t}[:,i]
\right|
\le \beta_{\mathrm{reg}}.
$$
Equivalently, for each $i$,
$$
\sqrt{m_{L-1}}
\sum_{t=1}^T
\lambda_t^\top D_{\mathcal T}^{L-1,t}[:,i]
-\beta_{\mathrm{reg}}
\le 0,
$$
and
$$
-\sqrt{m_{L-1}}
\sum_{t=1}^T
\lambda_t^\top D_{\mathcal T}^{L-1,t}[:,i]
-\beta_{\mathrm{reg}}
\le 0.
$$
At $\lambda_1=\cdots=\lambda_T=0$, all inequalities are strict because $\beta_{\mathrm{reg}}>0$, so Slater's condition holds.

Introduce nonnegative multipliers $\gamma_i^+$ and $\gamma_i^-$ for the two inequalities. The Lagrangian terms involving $\lambda_t$ become
$$
-\sum_{t=1}^T\mathcal L^*(-\lambda_t)
-
\sqrt{m_{L-1}}
\sum_{t=1}^T
\lambda_t^\top
\sum_i(\gamma_i^+-\gamma_i^-)
D_{\mathcal T}^{L-1,t}[:,i]
+
\beta_{\mathrm{reg}}\sum_i(\gamma_i^++\gamma_i^-).
$$
Define
$$
v_t
=
-\sqrt{m_{L-1}}
D_{\mathcal T}^{L-1,t}
(\gamma^+-\gamma^-).
$$
Using the Fenchel identity for each timestep gives the primal equivalent
$$
\min_{\gamma^+,\gamma^-\ge 0}
\sum_{t=1}^T
\mathcal L\!\left(
-\sqrt{m_{L-1}}
D_{\mathcal T}^{L-1,t}
(\gamma^+-\gamma^-),
Y^t
\right)
+
\beta_{\mathrm{reg}}\sum_i(\gamma_i^++\gamma_i^-).
$$
Let $w=\gamma^+-\gamma^-$. For fixed $w$, minimizing over nonnegative $\gamma^+,\gamma^-$ gives
$$
\sum_i(\gamma_i^++\gamma_i^-)=\|w\|_1.
$$
Therefore,
$$
\min_{w\in\mathbb{R}^{|\mathcal T_{L-1}|}}
\sum_{t=1}^T
\mathcal L\!\left(
-\sqrt{m_{L-1}}
D_{\mathcal T}^{L-1,t}w,
Y^t
\right)
+
\beta_{\mathrm{reg}}\|w\|_1.
$$
Finally, set
$$
\widetilde w=-\sqrt{m_{L-1}}\,w.
$$
Then
$$
-\sqrt{m_{L-1}}D_{\mathcal T}^{L-1,t}w
=
D_{\mathcal T}^{L-1,t}\widetilde w,
\qquad
\|w\|_1
=
\frac{1}{\sqrt{m_{L-1}}}\|\widetilde w\|_1.
$$
Thus,
$$
\widetilde P_{\mathrm{all}\text{-}t}^{L,T}
=
\min_{\widetilde w\in\mathbb{R}^{|\mathcal T_{L-1}|}}
\sum_{t=1}^T
\mathcal L(D_{\mathcal T}^{L-1,t}\widetilde w,Y^t)
+
\frac{\beta_{\mathrm{reg}}}{\sqrt{m_{L-1}}}\|\widetilde w\|_1.
$$
The generator now lies in $\mathbb{R}^{nT}$ because predictions are stacked across all $T$ timesteps. Therefore, the Carathéodory bound becomes $K^*\le nT+1$ for scalar outputs.

\subsection{Training-time cost for the SNN arrangement construction}
\label{app:snn_training_time}

The exact SNN convex formulation is dominated by the construction of the witnessed SNN dictionaries. The key difference from the generic recurrent threshold-network construction is that each SNN recurrent state contains both the membrane variable and the spike variable.

At cell $(l,t)$, the SNN pre-activation uses the stacked feature matrix
$$
\begin{bmatrix}
S_{l-1}^t & U_l^{t-1} & S_l^{t-1}
\end{bmatrix}
\in
\mathbb{R}^{n\times(m_{l-1}+2m_l)}.
$$
Thus, one call to $A_{\mathrm{SNN}}(\cdot)$ acts on an input matrix of width
$$
p^{l,t}=m_{l-1}+2m_l.
$$
Let
$$
r^{l,t}=\min(n,p^{l,t}).
$$
By the same hyperplane-arrangement counting argument used for the recurrent dictionary, one call costs
$$
O\!\left(n^{r^{l,t}+1}(m_{l-1}+2m_l)\right).
$$

Let $Q^{l,t}$ denote the number of spiked dictionary entries available at $(l,t)$ before deduplicating identical spike columns. Although the stacked SNN state has three blocks, there are only two subset choices. The forward subset contains $m_{l-1}$ spike columns from $D_{\mathrm{SNN}}^{l-1,t}$. The recurrent subset contains $m_l$ witnessed membrane--spike pairs from the same layer at timestep $t-1$. Choosing a recurrent spike subset automatically selects the associated membrane values under the same witness, so there is no third combinatorial factor for membrane subsets.

Hence, the number of subset choices is bounded by
$$
\binom{Q^{l-1,t}}{m_{l-1}}
\binom{Q^{l,t-1}}{m_l}.
$$
For each subset pair, we apply $A_{\mathrm{SNN}}(\cdot)$ to an
$n\times(m_{l-1}+2m_l)$ stacked feature matrix. Therefore,
$$
\operatorname{cost}(D_{\mathrm{SNN}}^{l,t})
=
O\!\left(
\binom{Q^{l-1,t}}{m_{l-1}}
\binom{Q^{l,t-1}}{m_l}
n^{r^{l,t}+1}(m_{l-1}+2m_l)
\right).
$$

Let
$$
m^*=\max_{l\in[L-1]}m_l,
\qquad
r^*=\min(n,3m^*),
\qquad
Q^*=\max_{l,t}Q^{l,t}.
$$
Bounding all $LT$ cells by the worst-case cell gives
$$
O\!\left(
LT
\binom{Q^*}{m^*}^{2}
n^{r^*+1}m^*
\right).
$$
For a $K$-parallel SNN, the construction is performed independently for each subnetwork, so the cost scales linearly in $K$:
$$
O\!\left(
LTK
\binom{Q^*}{m^*}^{2}
n^{r^*+1}m^*
\right).
$$
Ignoring witness multiplicity and using the projected-column bound
$|D_{\mathrm{SNN}}^{l,t}|\le 2^n$
gives the simpler loose bound
$$
O\!\left(
LTK
\binom{2^n}{m^*}^{2}
n^{r^*+1}m^*
\right).
$$

For per-timestep supervision, the same spiked dictionaries are constructed. The additional step is assembling the trajectory-stacked dictionary
$$
D^{\mathrm{traj}}
=
\begin{bmatrix}
D_{\mathcal T}^{L-1,1}\\
\vdots\\
D_{\mathcal T}^{L-1,T}
\end{bmatrix}
\in
\{0,1\}^{nT\times|\mathcal T_{L-1}|},
$$
which costs
$$
O(nT|\mathcal T_{L-1}|).
$$
Thus, per-timestep supervision does not change the per-cell arrangement-construction cost. It increases the ambient dimension of the final convex program from $n$ to $nT$ and changes the scalar-output Carathéodory bound from $n+1$ to $nT+1$.

\section{Training Algorithms and Witness Regimes}
\label{app:training_algorithms}

This appendix gives the implementation details for the witness regimes used in
Section~\ref{sec:practical_impl}. All convex variants follow the same template:
fix a finite set of hidden LIF witnesses, roll each witness through the input
sequence, construct the induced spike dictionary, and solve a convex output-layer
reconstruction problem. The variants differ only in how the hidden witnesses are
obtained. Surrogate-gradient variants instead update all trainable SNN parameters
by backpropagation through time using a surrogate derivative for the spike
activation.

\subsection{Overview of implemented variants}
\label{app:training_algorithms_overview}

We use five training variants as in Table \ref{tab:training_variants}.
The two convex reconstruction methods are \textsc{CVX} and
SG-CVX. In both cases, the hidden LIF witnesses
are frozen before the convex solve, so the only optimization variables in the
convex stage are the output coefficients. The finetuning methods
SG-SG and
CVX-SG are empirical hybrid baselines; once
surrogate-gradient finetuning begins, the convex optimality guarantee no longer
applies.

\begin{table}[t]
\centering
\small
\caption{Training variants used in the experiments.}
\label{tab:training_variants}
\begin{tabular}{lll}
\toprule
Method & Hidden dynamics & Output layer / training stage \\
\midrule
\textsc{SG} & surrogate-gradient trained & surrogate-gradient trained \\
\textsc{CVX} & Gaussian sampled, frozen & convex reconstruction \\
\textsc{SG-CVX} & surrogate-pretrained, frozen & convex reconstruction \\
\textsc{SG-SG} & initialized from \textsc{SG} & surrogate-gradient finetuning \\
\textsc{CVX-SG} & initialized from \textsc{CVX} & surrogate-gradient finetuning \\
\bottomrule
\end{tabular}
\end{table}

\subsection{Complete versus sampled witnessed LIF dictionaries}
\label{app:sampled_witness_regime}

This section formalizes the distinction between the complete spiked dictionary
used in the exact theory and the finite sampled spiked dictionary used in the
implementation.

Let $\mathcal W_{\mathrm{LIF}}$ denote the LIF witness space. A witness
\[
\omega
=
\{(P_{l,\mathrm{in}},B_l,R_l,U_l^0)\}_{l=1}^{L-1}
\in
\mathcal W_{\mathrm{LIF}}
\]
specifies the hidden LIF dynamics across all layers. The witness space may be
continuous and therefore infinite. For a fixed training sequence $X^{1:T}$,
define the final-time spike map
\[
\psi_T(\omega)
=
S_{L-1}^T(X^{1:T};\omega)
\in\{0,1\}^n.
\]
The complete projected LIF dictionary is the image of this map:
\[
\mathcal D_{\mathrm{SNN}}^{L-1,T}
=
\psi_T(\mathcal W_{\mathrm{LIF}})
=
\{\psi_T(\omega):\omega\in\mathcal W_{\mathrm{LIF}}\}
\subseteq\{0,1\}^n.
\]
Therefore,
\[
|\mathcal D_{\mathrm{SNN}}^{L-1,T}|\le 2^n.
\]
Thus, the projected dictionary is finite on a finite training set, even though
the witness space itself may be infinite.

A complete spiked dictionary additionally stores a representative witness for
each projected spike column. If
\[
\mathcal D_{\mathrm{SNN}}^{L-1,T}
=
\{d_1,\ldots,d_P\},
\]
then the spiked dictionary stores pairs
\[
\mathfrak D_{\mathrm{SNN}}^{L-1,T}
=
\{(d_i,\omega_i)\}_{i=1}^{P},
\qquad
\psi_T(\omega_i)=d_i.
\]
If multiple witnesses generate the same spike column, the witness fiber is
\[
\Omega_i
=
\{\omega\in\mathcal W_{\mathrm{LIF}}:\psi_T(\omega)=d_i\}.
\]
The fiber $\Omega_i$ can be infinite. However, for reconstructing a network that
matches the convex solution on the training set, it is sufficient to store one
representative witness $\omega_i\in\Omega_i$.

The sampled witness regime replaces the complete witness space by a finite set
\[
\widehat{\mathcal W}_M
=
\{\widehat\omega_1,\ldots,\widehat\omega_M\}
\subset
\mathcal W_{\mathrm{LIF}}.
\]
This finite set may be produced by Gaussian sampling, surrogate-gradient
pretraining, or another witness-generation procedure. Once sampled, the witnesses
are frozen. Rolling these witnesses through the LIF recurrence produces the
finite sampled projected dictionary
\[
\widehat D_{\mathrm{SNN}}
=
[\psi_T(\widehat\omega_1),\ldots,\psi_T(\widehat\omega_M)].
\]
Duplicate projected spike columns may be removed, provided that at least one
generating witness is kept for every retained column.

Conditioned on the finite sampled witness set $\widehat{\mathcal W}_M$, the
reconstruction problem is a deterministic finite convex program:
\[
\widehat P_{\mathrm{SNN}}^{L,T,M}
=
\min_{\widetilde w\in\mathbb R^{|\widehat{\mathcal W}_M|}}
\mathcal L(\widehat D_{\mathrm{SNN}}\widetilde w,Y)
+
\frac{\beta_{\mathrm{reg}}}{\sqrt{m_{L-1}}}\|\widetilde w\|_1 .
\]
Thus, a continuous sampling distribution does not create an infinite optimization
problem. The randomness appears only in the construction of the finite
dictionary. After conditioning on the draw, the convex program optimizes only the
output coefficients and is globally optimal for the sampled spiked dictionary.

\subsection{Reconstructing an LIF-SNN from the convex solution}
\label{app:lif_reconstruction}

The complete spiked dictionary stores pairs $(d_i,\omega_i)$, where
$d_i\in\{0,1\}^n$ is a spike column and $\omega_i$ is at least one LIF witness
that realizes it on the training set. Therefore, after solving the finite convex
program, no inverse realization problem is needed: each active coefficient
retrieves a stored witness and becomes one parallel LIF subnetwork.

If witness metadata is discarded and only projected spike columns are stored,
reconstruction becomes a separate trajectory-realization problem. One must
recover membrane states and LIF parameters that reproduce the selected spike
trajectory across all layers and timesteps. This problem is substantially harder
than output-layer convex reconstruction and can become nonconvex when leak
parameters are optimized, because the recurrence contains products between
membrane variables and leak coefficients. Our theory avoids this inverse problem
by defining the complete dictionary as witnessed.

\begin{algorithm}[H]
\caption{Witnessed LIF-SNN Reconstruction from Convex Solution}
\label{alg:lif_reconstruction}
\KwIn{spiked dictionary
$D_{\mathrm{SNN}}^{L-1,T}=[d_1,\ldots,d_P]$ with witness map
$i\mapsto\omega_i$; convex solution
$\widetilde w^*\in\mathbb{R}^{P}$; final hidden width $m_{L-1}$}
\KwOut{$K$-parallel LIF-SNN parameters
$\Theta=\{(\omega_k,P_{k,\mathrm{out}})\}_{k=1}^K$ realizing the convex predictor on the training set}

Set $\mathcal S\gets\{i:\widetilde w_i^*\neq 0\}$\;
\tcp{There exists an equivalent scalar-output solution with $|\mathcal S|\le n+1$.}
Set $K\gets|\mathcal S|$\;

\For{$k=1,\ldots,K$}{
    Let $i_k$ be the $k$-th active index in $\mathcal S$\;
    Retrieve the stored witness $\omega_k\gets\omega_{i_k}$\;
    Set subnetwork $k$'s hidden LIF parameters to $\omega_k$\;
    Set
    $$
    P_{k,\mathrm{out}}
    \gets
    \frac{\widetilde w_{i_k}^*}{m_{L-1}}\mathbf 1
    \in\mathbb{R}^{m_{L-1}}.
    $$
}
\Return{$\Theta=\{(\omega_k,P_{k,\mathrm{out}})\}_{k=1}^K$}\;
\end{algorithm}

\subsection{Surrogate-gradient pretraining: \textsc{SG}}
\label{app:alg_lif_pt}

\textsc{SG} is the standard surrogate-gradient LIF-SNN baseline. It trains
the hidden LIF dynamics and the output layer jointly by backpropagation through
time using a surrogate derivative for the spike activation.

\begin{algorithm}[H]
\caption{\textsc{SG}}
\label{alg:lif_pt}
\KwIn{Training data $\{X^t\}_{t=1}^T$, labels $Y$, LIF-SNN architecture, surrogate loss $\mathcal L_{\mathrm{sg}}$, number of epochs $E$}
\KwOut{Pretrained LIF-SNN parameters $\Theta_{\mathrm{LIF}}$}

Initialize hidden parameters
$\{P_{l,\mathrm{in}},B_l,R_l\}_{l=1}^{L-1}$
and output weights $P_{\mathrm{out}}$\;

\For{$e=1,\ldots,E$}{
    Initialize membrane and spike states $\{U_l^0,S_l^0\}_{l=1}^{L-1}$\;

    \For{$t=1,\ldots,T$}{
        \For{$l=1,\ldots,L-1$}{
            Compute
            $$
            U_l^t
            =
            S_{l-1}^tP_{l,\mathrm{in}}
            +
            U_l^{t-1}B_l
            -
            S_l^{t-1}R_l.
            $$
            Compute spikes $S_l^t\gets\sigma(U_l^t)$ using the surrogate-gradient spike function\;
        }
    }

    Compute prediction $f_{\Theta}(X^{1:T})$ using the chosen readout rule\;
    Compute surrogate-gradient loss
    $\mathcal L_{\mathrm{sg}}(f_{\Theta}(X^{1:T}),Y)$\;
    Update all trainable parameters $\Theta$ by BPTT with surrogate gradients\;
}

\Return{$\Theta_{\mathrm{LIF}}$}\;
\end{algorithm}

\subsection{Gaussian spiked dictionaries: \textsc{CVX}}
\label{app:alg_cvx_g}

\textsc{CVX} follows the Convex-Lasso approximation of
\cite{ergen_globally_2023}. Instead of enumerating the complete witnessed
dictionary, we sample a finite set of hidden LIF witnesses and solve the convex
problem on the induced spike dictionary. For each layer $l$ and subnetwork $k$,
the sampled witness consists of input-to-membrane weights and a LIF-structured
recurrent block,
\[
G_{l,k,\mathrm{in}}\in\mathbb{R}^{m_{l-1,k}\times m_{l,k}},
\qquad
G_{l,k,\mathrm{rec}}
=
\begin{bmatrix}
B_{l,k}\\
-R_{l,k}
\end{bmatrix}.
\]
The same witness is reused across all timesteps; recurrent weights are not
resampled independently at each timestep. Related capacity and separation
results motivate Gaussian witnesses as high-capacity random features
\cite{vershynin_2020,dirksen_etal_2022}, but they do not prove completeness of
Gaussian recurrent or LIF-SNN dictionaries.

\begin{algorithm}[H]
\caption{\textsc{CVX}}
\label{alg:cvx_g}
\KwIn{Training data $\{X^t\}_{t=1}^T$, labels $Y$, widths $\{m_{l,k}\}$, number of subnetworks $K$, regularization $\beta_{\mathrm{reg}}$}
\KwOut{Gaussian-sampled dictionary $\widehat D_{\mathrm{G}}$ and convex output weights $\widetilde W$}

\For{$k=1,\ldots,K$}{
    Set $S_{0,k}^t\gets X^t$ for all $t\in[T]$\;

    \For{$l=1,\ldots,L-1$}{
        Sample and fix Gaussian input weights
        $G_{l,k,\mathrm{in}}\in\mathbb{R}^{m_{l-1,k}\times m_{l,k}}$\;

        Sample or fix LIF-structured recurrent parameters
        $B_{l,k}=\operatorname{Diag}(\beta_{l,k})$ and
        $R_{l,k}=\operatorname{Diag}(U_{\mathrm{thr},l,k})$\;

        Initialize $U_{l,k}^0\gets 0$ and $S_{l,k}^0\gets 0$\;

        \For{$t=1,\ldots,T$}{
            Compute
            $$
            U_{l,k}^t
            =
            S_{l-1,k}^tG_{l,k,\mathrm{in}}
            +
            U_{l,k}^{t-1}B_{l,k}
            -
            S_{l,k}^{t-1}R_{l,k}.
            $$
            Compute $S_{l,k}^t\gets\sigma(U_{l,k}^t)$\;
        }
    }

    Store final-layer spike features $\{S_{L-1,k}^t\}_{t=1}^T$\;
}

Construct the sampled spiked dictionary $\widehat D_{\mathrm{G}}$ from the stored final-layer spike features\;

Solve
$$
\widetilde W
\gets
\arg\min_W\
\mathcal L(\widehat D_{\mathrm{G}}W,Y)
+
\beta_{\mathrm{reg}}\|W\|_1 .
$$

\Return{$\widehat D_{\mathrm{G}},\widetilde W$}\;
\end{algorithm}

\subsection{Surrogate-pretrained spiked dictionaries: \textsc{SG-CVX}}
\label{app:alg_cvx_from_lif}

\textsc{SG-CVX} tests whether surrogate-gradient training can
produce a more useful spiked dictionary than unstructured Gaussian sampling.
We first train a LIF-SNN on the same training distribution, extract its hidden
dynamics, freeze them, and solve convex reconstruction over the resulting spike
dictionary. We do not claim that surrogate-pretrained witnesses have higher
separation capacity than Gaussian witnesses; empirically, they may be better
aligned with the task distribution and more stable under length extrapolation.

\begin{algorithm}[H]
\caption{\textsc{SG-CVX}}
\label{alg:cvx_from_lif}
\KwIn{Pretrained LIF-SNN parameters $\Theta_{\mathrm{LIF}}$, training data $\{X^t\}_{t=1}^T$, labels $Y$, regularization $\beta_{\mathrm{reg}}$}
\KwOut{Pretrained LIF dictionary $\widehat D_{\mathrm{LIF}}$ and convex output weights $\widetilde W$}

Extract hidden LIF parameters
$\{\widehat P_{l,\mathrm{in}},\widehat B_l,\widehat R_l\}_{l=1}^{L-1}$
from $\Theta_{\mathrm{LIF}}$\;

Freeze the extracted hidden dynamics\;

\For{$k=1,\ldots,K$}{
    Set $S_{0,k}^t\gets X^t$ for all $t\in[T]$\;

    \For{$l=1,\ldots,L-1$}{
        Assign the frozen pretrained hidden block
        $(\widehat P_{l,k,\mathrm{in}},\widehat B_{l,k},\widehat R_{l,k})$\;

        Initialize $U_{l,k}^0\gets 0$ and $S_{l,k}^0\gets 0$\;

        \For{$t=1,\ldots,T$}{
            Compute
            $$
            U_{l,k}^t
            =
            S_{l-1,k}^t\widehat P_{l,k,\mathrm{in}}
            +
            U_{l,k}^{t-1}\widehat B_{l,k}
            -
            S_{l,k}^{t-1}\widehat R_{l,k}.
            $$
            Compute $S_{l,k}^t\gets\sigma(U_{l,k}^t)$\;
        }
    }

    Store final-layer spike features $\{S_{L-1,k}^t\}_{t=1}^T$\;
}

Construct the pretrained spike dictionary $\widehat D_{\mathrm{LIF}}$ from the stored final-layer spike features\;

Solve
$$
\widetilde W
\gets
\arg\min_W\
\mathcal L(\widehat D_{\mathrm{LIF}}W,Y)
+
\beta_{\mathrm{reg}}\|W\|_1 .
$$

\Return{$\widehat D_{\mathrm{LIF}},\widetilde W$}\;
\end{algorithm}

\subsection{Surrogate-gradient finetuning from LIF: \textsc{SG-SG}}
\label{app:alg_lif_ft_from_lif}

\textsc{SG-SG} is the standard finetuning baseline initialized
from a surrogate-pretrained LIF-SNN. It is included to separate gains from
pretraining alone from gains due to convex reconstruction.

\begin{algorithm}[H]
\caption{\textsc{SG-SG}}
\label{alg:lif_ft_from_lif}
\KwIn{Pretrained LIF-SNN parameters $\Theta_{\mathrm{LIF}}$, finetuning data $\{X_{\mathrm{ft}}^t\}_{t=1}^T$, labels $Y_{\mathrm{ft}}$, surrogate loss $\mathcal L_{\mathrm{sg}}$, finetuning epochs $E_{\mathrm{ft}}$}
\KwOut{Finetuned LIF-SNN parameters $\Theta_{\mathrm{FT}\leftarrow\mathrm{LIF}}$}

Initialize $\Theta\gets\Theta_{\mathrm{LIF}}$\;

\For{$e=1,\ldots,E_{\mathrm{ft}}$}{
    Initialize membrane and spike states $\{U_l^0,S_l^0\}_{l=1}^{L-1}$\;

    \For{$t=1,\ldots,T$}{
        Roll out the LIF-SNN on $X_{\mathrm{ft}}^t$\;
    }

    Compute prediction $f_{\Theta}(X_{\mathrm{ft}}^{1:T})$\;
    Compute surrogate-gradient loss
    $\mathcal L_{\mathrm{sg}}(f_{\Theta}(X_{\mathrm{ft}}^{1:T}),Y_{\mathrm{ft}})$\;
    Update all trainable parameters using surrogate gradients\;
}

Set $\Theta_{\mathrm{FT}\leftarrow\mathrm{LIF}}\gets\Theta$\;

\Return{$\Theta_{\mathrm{FT}\leftarrow\mathrm{LIF}}$}\;
\end{algorithm}

\subsection{Surrogate-gradient finetuning from CVX: \textsc{CVX-SG}}
\label{app:alg_lif_ft_from_cvx}

\textsc{CVX-SG} tests the reverse direction: whether a convex
output-layer solution gives a useful initialization for subsequent
surrogate-gradient training. Since the finetuning stage updates hidden recurrent
weights, this method is an empirical hybrid rather than a convex optimization
method.

\begin{algorithm}[H]
\caption{\textsc{CVX-SG}}
\label{alg:lif_ft_from_cvx}
\KwIn{\textsc{CVX} sampled hidden witnesses, convex output weights $\widetilde W$, finetuning data $\{X_{\mathrm{ft}}^t\}_{t=1}^T$, labels $Y_{\mathrm{ft}}$, surrogate loss $\mathcal L_{\mathrm{sg}}$, finetuning epochs $E_{\mathrm{ft}}$}
\KwOut{Finetuned LIF-SNN parameters $\Theta_{\mathrm{FT}\leftarrow\mathrm{CVX}}$}

Initialize hidden LIF weights using the sampled dictionary witnesses from \textsc{CVX}\;
Initialize the output layer using the convex solution $\widetilde W$\;

\For{$e=1,\ldots,E_{\mathrm{ft}}$}{
    Initialize membrane and spike states $\{U_l^0,S_l^0\}_{l=1}^{L-1}$\;

    \For{$t=1,\ldots,T$}{
        Roll out the initialized LIF-SNN on $X_{\mathrm{ft}}^t$\;
    }

    Compute prediction $f_{\Theta}(X_{\mathrm{ft}}^{1:T})$\;
    Compute surrogate-gradient loss
    $\mathcal L_{\mathrm{sg}}(f_{\Theta}(X_{\mathrm{ft}}^{1:T}),Y_{\mathrm{ft}})$\;
    Update all trainable parameters, including hidden recurrent weights and output weights\;
}

Set $\Theta_{\mathrm{FT}\leftarrow\mathrm{CVX}}\gets\Theta$\;

\Return{$\Theta_{\mathrm{FT}\leftarrow\mathrm{CVX}}$}\;
\end{algorithm}

\section{Arithmetic Addition Experimental Details}
\label{app:addition_repro}

\paragraph{Compute Resources} All pure convex reconstruction runs were trained primarily on a local MacBook M4 Pro with 24 GB of RAM. Surrogate-gradient training and finetuning runs were performed on A100 GPU partitions, including A100x4 and A100x8 configurations.

\paragraph{Why addition and autoregressive rollout?}
We use arithmetic addition as a controlled probe of temporal algorithmic generalization. Prior work on sequence models and Transformers uses arithmetic in the same spirit: models can fit short training lengths while failing to extrapolate to longer digit sequences, and successful length generalization often requires architectural or supervision choices that expose the task structure \cite{jelassi2023length,cho2024position,cho2024arithmetic}. Addition is therefore useful not because the input-output map is intrinsically difficult, but because correct extrapolation requires a reusable carry-transition rule.

Our main protocol uses a carry-augmented recurrent formulation. At each timestep, the model predicts both the sum digit and the next carry. This teacher-forced supervision gives direct access to the latent state that mediates the recurrent computation. At evaluation time, we use autoregressive rollout, where the model must reuse its own predicted carry over longer horizons. Thus, the benchmark separates short-horizon fitting from stable recurrent extrapolation. This is also the setting most aligned with our spiked dictionary theory: a useful LIF feature is not merely a static output pattern, but a recurrent trajectory generated by hidden membrane and spike dynamics.

We also evaluate a direct operand-to-output variant without explicit latent carry supervision. That setting serves as a protocol ablation: it asks whether convex reconstruction improves local digit prediction even when the model is not forced to learn an interpretable carry state. In our results, the direct variant improves token-level prediction and delays first errors, but does not replace the recurrent rollout benchmark as the main test of length generalization.

\paragraph{Task and splits.}
We train on addition in bases $b\in\{2,3,5\}$ with $n_{\mathrm{digits}}=5$, so the training horizon is $T=n_{\mathrm{digits}}+1=6$. Each timestep input is the normalized tuple $(a_t,b_t,c_t^{\mathrm{in}})$, where $a_t$ and $b_t$ are the operand digits and $c_t^{\mathrm{in}}$ is the carry into the current digit column. The model predicts the sum digit $y_t^{\mathrm{sum}}$ and the next carry $y_t^{\mathrm{carry}}$ at every timestep. All bases are averaged over seeds $\{0,1,2\}$. Each run uses separate pretraining and finetuning splits: $n_{\mathrm{train,pre}}=2304$, $n_{\mathrm{val,pre}}=512$, $n_{\mathrm{train,ft}}=2304$, and $n_{\mathrm{val,ft}}=512$. The held-out ID test set has $n_{\mathrm{test}}=1024$, and each OOD split has $n_{\mathrm{test,ood}}=1024$. For base $2$, we evaluate $n_{\mathrm{digits}}\in\{10,20,50\}$; for bases $3$ and $5$, we evaluate $n_{\mathrm{digits}}\in\{10,25,50\}$.

\paragraph{Architecture and training variants.}
All addition runs use a $K$-parallel LIF-SNN with
\[
L=3,\qquad P_{\mathrm{rec}}=256,\qquad P_{\mathrm{last}}=512,\qquad K=2.
\]
Both SG and CVX use final-layer spike readout, so the convex dictionary is built from binary spike features. This matches the witnessed spike-dictionary formulation in the theory. For bases $b>2$, the hidden dictionary remains binary, while the output head is multiclass and maps spike features to one of $b$ sum digits. The five training variants are exactly those defined in Appendix~\ref{app:training_algorithms}: \textsc{SG}, \textsc{SG-CVX}, \textsc{SG-SG}, \textsc{CVX}, and \textsc{CVX-SG}.

\paragraph{Losses and hyperparameter sweeps.}
The carry-augmented objective is
\[
\mathcal L_{\mathrm{joint}}
=
\lambda_{\mathrm{sum}}\mathcal L_{\mathrm{sum}}
+
\lambda_{\mathrm{carry}}\mathcal L_{\mathrm{carry}},
\]
with $\lambda_{\mathrm{sum}}=1$. We sweep
\[
\lambda_{\mathrm{carry}}
\in
\{0.125,0.25,0.5,0.75,1.0,1.25,1.5,2.0,4.0,6.0,8.0,10.0\}.
\]
The surrogate-gradient stages use Adam for $100$ pretraining epochs and $100$ finetuning epochs. The SG learning-rate grid is
\[
\{10^{-3},5\cdot 10^{-3},10^{-2},10^{-1}\},
\]
and the SG and CVX regularization grids are both
\[
\{10^{-2},10^{-1},0.5,1.0,5.0,10.0\}.
\]
Because spike readout is used, the CVX bias grid is fixed to $\{0.0\}$. Both SG and CVX use ramped timestep weighting and the joint teacher-forcing objective during training.

\paragraph{Evaluation metrics.}
Every trained stage is evaluated under both teacher forcing and autoregressive rollout. Unless otherwise stated, the main paper reports autoregressive evaluation. The primary metric is joint-token accuracy:
\[
\frac{1}{nT}\sum_{i=1}^n\sum_{t=1}^T
\mathbf 1\{\hat y_{i,t}^{\mathrm{sum}}=y_{i,t}^{\mathrm{sum}}\}
\mathbf 1\{\hat y_{i,t}^{\mathrm{carry}}=y_{i,t}^{\mathrm{carry}}\}.
\]
A timestep is counted as correct only when both the sum digit and carry prediction are correct. We also track sum-token accuracy, carry-token accuracy, joint-sequence accuracy, and first-error timestep statistics.

\paragraph{Carry-loss sensitivity.}
The appendix figures report autoregressive joint-token accuracy as a function of $\lambda_{\mathrm{carry}}$ for ID and OOD splits. These plots show the tradeoff between enforcing the latent carry variable and maintaining stable rollout behavior at longer horizons. The main tables report validation-selected configurations aggregated over seeds.

\begin{figure*}[t]
    \centering
    \includegraphics[width=\textwidth]{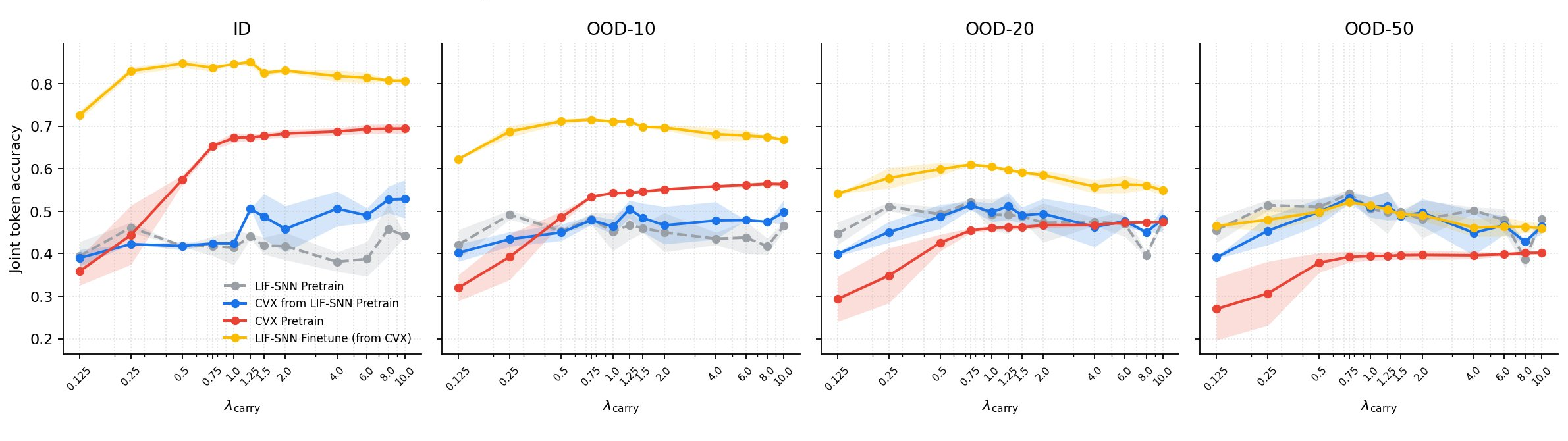}
    \caption{
    Base-$2$ addition: effect of $\lambda_{\mathrm{carry}}$ on autoregressive joint-token accuracy for ID and OOD splits. Results are averaged over three seeds. The architecture is $L=3$, $P_{\mathrm{rec}}=256$, $P_{\mathrm{last}}=512$, and $K=2$, with final-layer spike readout for both SG and CVX. OOD lengths are $n_{\mathrm{digits}}\in\{10,20,50\}$.
    }
    \label{fig:addition_lambda_base2}
\end{figure*}

\begin{figure*}[t]
    \centering
    \includegraphics[width=\textwidth]{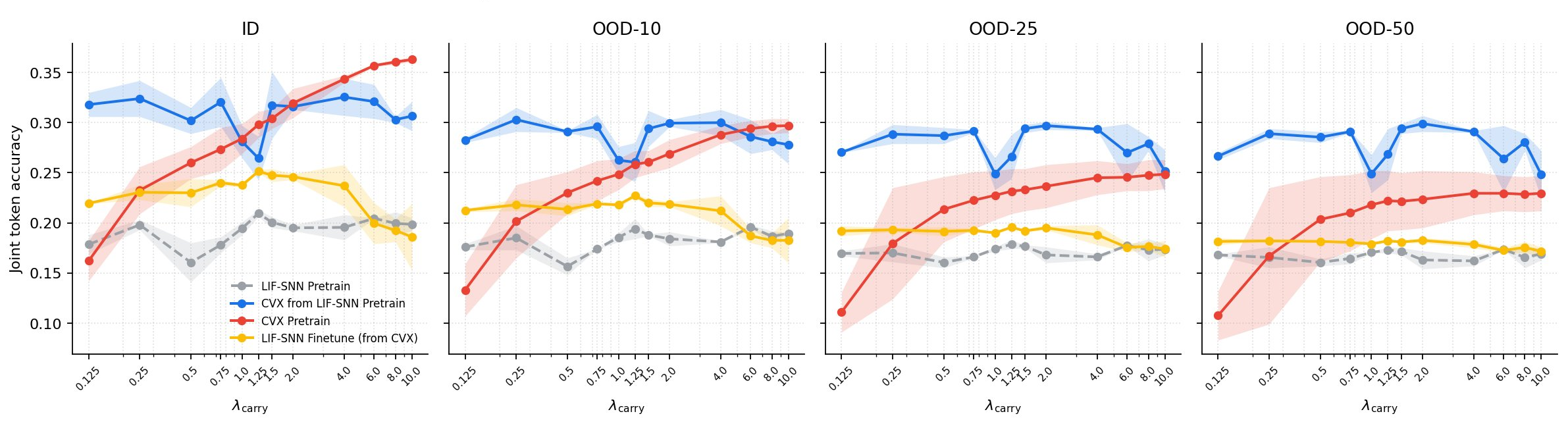}
    \caption{
    Base-$3$ addition: effect of $\lambda_{\mathrm{carry}}$ on autoregressive joint-token accuracy for ID and OOD splits. Results are averaged over three seeds. The architecture and readout match Figure~\ref{fig:addition_lambda_base2}; OOD lengths are $n_{\mathrm{digits}}\in\{10,25,50\}$.
    }
    \label{fig:addition_lambda_base3}
\end{figure*}

\begin{figure*}[t]
    \centering
    \includegraphics[width=\textwidth]{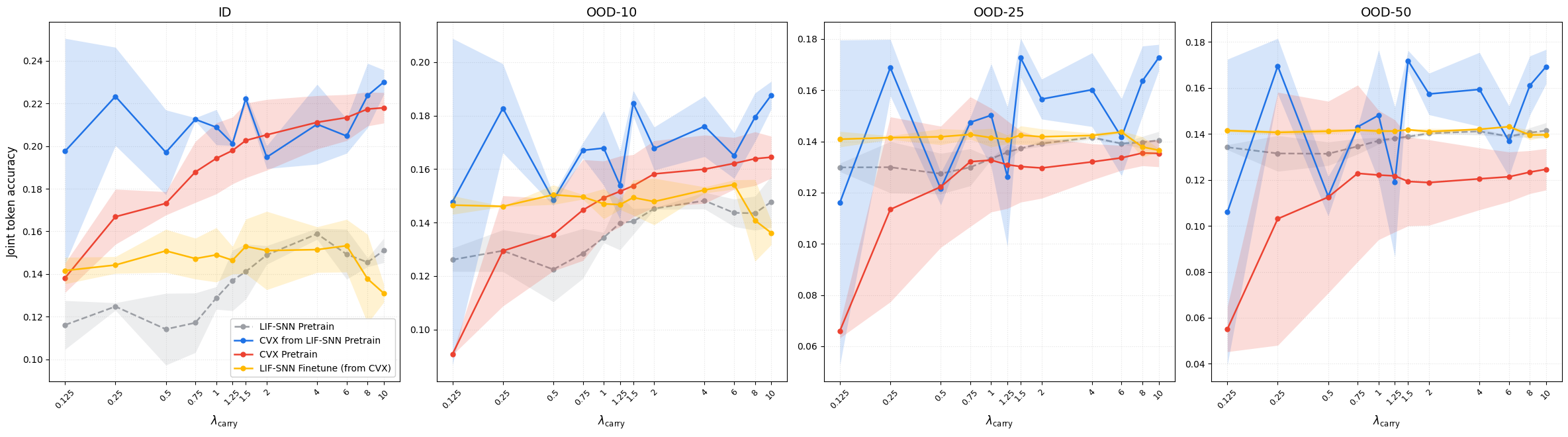}
    \caption{
    Base-$5$ addition: effect of $\lambda_{\mathrm{carry}}$ on autoregressive joint-token accuracy for ID and OOD splits. Results are averaged over the available two seeds. The architecture is the same as the base-$2$ and base-$3$ experiments: $L=3$, $P_{\mathrm{rec}}=256$, $P_{\mathrm{last}}=512$, and $K=2$, with final-layer spike readout for both SG and CVX. OOD lengths are $n_{\mathrm{digits}}\in\{10,25,50\}$.
    }
    \label{fig:addition_lambda_base5}
\end{figure*}

\paragraph{Aggregation.}
The main tables aggregate results over seeds after validation selection over the corresponding hyperparameter grid and $\lambda_{\mathrm{carry}}$ sweep. The sensitivity plots in this appendix show the full $\lambda_{\mathrm{carry}}$ dependence before selecting the final reported operating point.

\section{Broader Impact}
This research may serve as a stepping stone toward training more energy-efficient and biologically plausible AI models at scale. Beyond SNNs, the reconstruction viewpoint is also naturally aligned with reservoir computing, where internal recurrent dynamics are fixed and only the final readout layer is trained, similar in spirit to our \textsc{CVX-G} and \textsc{SG-CVX} algorithms. More broadly, globally optimized readout reconstruction can provide a principled way to reuse fixed or pretrained dynamical systems without relying entirely on end-to-end gradient-based training. If developed further, this could reduce the computational and energy costs of training temporal models, while also offering a clearer diagnostic framework for understanding when failures arise from optimization, representation, or the choice of internal dynamics.

\end{document}